\documentclass[10pt, journal, twocolumn]{IEEEtran}

\usepackage[utf8]{inputenc}
\usepackage{amsmath}
\usepackage{amsthm}
\usepackage{amsfonts}
\usepackage{amssymb}
\usepackage{graphicx}
\usepackage{graphics}
\usepackage{float}
\usepackage{tikz}
\usetikzlibrary{shapes, snakes, patterns}
\usepackage{xcolor}

\newcommand{\spheading}[2][5em]{
	\rotatebox{90}{\parbox{#1}{\raggedright #2}}}
\fboxsep=10mm
\fboxrule=4pt
\usepackage[export]{adjustbox}
\usepackage{subcaption}
\DeclareCaptionLabelSeparator{periodspace}{.\quad}
\captionsetup{font=footnotesize,labelsep=periodspace,singlelinecheck=false}
\captionsetup[sub]{font=footnotesize,singlelinecheck=true}
\captionsetup[table]{textfont={sc,footnotesize}, labelfont=footnotesize, labelsep=newline}
\usepackage{algorithm}
\usepackage{algpseudocode}
\usepackage{epstopdf}
\newtheorem{theorem}{Theorem}[]
\newtheorem{lemma}[]{Lemma}
\newtheorem{corollary}[]{Corollary}
\newtheorem{remark}[]{Remark}
\newtheorem{Assumption}[]{Assumption}
\newtheorem{model}[]{Model}
\newtheorem{definition}[]{Definition}
\usepackage[justification=centering]{caption}
\usepackage{enumerate} 
\usepackage{cite}
\setcounter{secnumdepth}{5}
\usepackage{suffix}
\usepackage{mathtools}
\usepackage{tabularx}
\usepackage{booktabs}
\usepackage{boldline,multirow}
\usepackage{diagbox}
\usepackage{hhline} 
\newcolumntype{C}{>{\centering\arraybackslash}X} 
\newcolumntype{L}{>{\centering\arraybackslash}m{2.1cm}}
\newcolumntype{Y}{>{\centering\arraybackslash}b{0.7cm}}
\newcolumntype{M}{>{\centering\arraybackslash}b{2.1cm}}
\newcolumntype{X}{>{\centering\arraybackslash}m{1cm}}
\newcolumntype{Z}{>{\centering\arraybackslash}b{1.75cm}}
\newcolumntype{Q}{>{\centering\arraybackslash}b{3.75cm}}
\newcolumntype{A}{>{\centering\arraybackslash}m{1.5cm}}
\newcolumntype{B}{>{\centering\arraybackslash}m{0.6667cm}}
\newcolumntype{?}{!{\vrule width 1pt}}
\newcolumntype{+}{!{\vrule width 2pt}}

\setlength{\extrarowheight}{1pt}
\setlength\extrarowheight{7pt}

\title{Subspace clustering without knowing the number of clusters: A parameter free approach}
\begin{document}
	\author{\IEEEauthorblockN{Vishnu Menon, Gokularam M,
			Sheetal Kalyani\\}
		\IEEEauthorblockA{
			Department of Electrical Engineering, Indian Institute of Technology Madras\\
			Chennai, India - 600036\\
			Email: ee16s301@ee.iitm.ac.in, ee17d400@smail.iitm.ac.in, 
			skalyani@ee.iitm.ac.in,
	}}
	\maketitle
	\begin{abstract} 
		Subspace clustering, the task of clustering high dimensional data when the data points come from a union of subspaces, is one of the fundamental tasks in unsupervised machine learning. Most of the existing algorithms for this task require prior knowledge of the number of clusters along with few additional parameters which need to be set or tuned apriori according to the type of data to be clustered. In this work, a parameter free method for subspace clustering is proposed, where the data points are clustered on the basis of the difference in the statistical distributions of the angles subtended by the data points within a subspace and those by points belonging to different subspaces. Given an initial fine clustering, the proposed algorithm merges the clusters until a final clustering is obtained. This, unlike many existing methods, does not require the number of clusters apriori. Also, the proposed algorithm does not involve the use of an unknown parameter or tuning for one. 
		A parameter free method for producing a fine initial clustering is also discussed, making the whole process of subspace clustering parameter free. The comparison of the proposed algorithm's performance with that of the existing state-of-the-art techniques in synthetic and real data sets shows the significance of the proposed method.
	\end{abstract}
	\section{Introduction}\label{sIntroduction}
	\textit{Data Clustering} is the problem of categorizing entities in the given dataset into groups called \textit{clusters} so that the entities in the same cluster are more `similar' than those from different clusters. 
	A comprehensive study of clustering algorithms is provided in \cite{xu2005survey}. 
	Very often, the dataset comprises points from a Euclidean space, and the clustering problem reduces to finding the groups which are hidden among those vectors. 
	In most techniques, distance measures are used as a similarity metric for clustering \cite{xu2005survey}. 
	However, the conventional distance measures become unreliable in high dimensions
	. In a high dimensional space, the data points are sparsely located. It is shown in \cite{beyer1999nearest} that the distance between any two high dimensional points becomes equal as the dimension $n\to\infty$.  Thus, most clustering algorithms which perform reasonably well in lower dimensions, fail in high dimensions. 
	
	Over the years, several algorithms\cite{parsons2004subspace} were developed for clustering data of large dimensions. 
	In many practical scenarios, the high dimensional data points are not uniformly distributed throughout the space but lie approximately in low dimensional structure \cite{Cherkassky1998Learning}
	. For example, the images of a face under different lighting conditions approximately lie in 9-dimensional subspace, even though they have a very large number of pixels \cite{basri2003lambertian}. 
	Principal component analysis (PCA) \cite{jolliffe2002principal} is a popular technique to retrieve a low dimensional linear subspace in which the high dimensional data points are concentrated. However, when there are multiple categories in the dataset, it is not appropriate to assume that the points lie in a single low dimensional subspace. For instance, if we have images of several faces under varying illumination conditions, then data will be lying in a union of multiple 9-dimensional subspaces
	. \textit{Subspace Clustering} addresses this problem
	by grouping data points such that each group shall contain points from a single subspace of a lower dimension \cite{vidal2011subspace}. 
	
	Subspace clustering is used extensively for 
	image representation and compression \cite{hong2006multiscale} 
	and computer vision problems like motion segmentation \cite{vidal2008multiframe}
	, face clustering \cite{ho2003clustering}, image segmentation \cite{yang2008unsupervised} and video segmentation \cite{vidal2005generalized}. It also finds applications in other fields, including hybrid system identification \cite{vidal2003algebraic}
	, gene expression analysis \cite{jiang2004cluster}, metabolic screening of new-borns \cite{achtert2006finding}, recommendation systems\cite{agarwal2005research} and web text mining \cite{zhou2014text}. 
	%
	Subspace clustering algorithms can be classified into four main types \cite{vidal2011subspace}: 
	(i) algebraic
	, (ii) iterative
	, (iii) statistical
	, (iv) spectral clustering-based. 
	{Algebraic techniques (like Generalized PCA \cite{vidal2005generalized}) assume that the data is noise-free and lie perfectly in the union of subspaces\cite{vidal2011subspace}. Sometimes they can be extended to handle moderate amounts of noise\cite{wu2001multibody}. Iterative methods (like Median K-Flats (MKF) \cite{zhang2009median}) alternate between assigning points to subspaces and recovering subspaces from each cluster. Statistical methods (like Agglomerative Lossy Compression (ALC) \cite{ma2007segmentation}) make assumptions about the generative model for the data.}
	
	Spectral clustering-based techniques have gathered a lot of attention in recent years. These methods take a two-stage approach: finding the  `affinity matrix' and then performing spectral clustering 
	\cite{von2007tutorial} on it. Each entry in the affinity matrix (sometimes referred to as graph) denotes similarity between the corresponding pair of points. The difference between different spectral clustering-based techniques is how the affinity matrix is obtained. 
	In recent years, affinity matrix is obtained using the `self-representation' of each data point with respect to all the other data points {\cite{elhamifar2013sparse,liu2012robust}}. If the data points $\textbf{m}_i$ are arranged as columns of the matrix $\textbf{M}$, then the self-representation is given by $\textbf{M}=\textbf{MZ} \ \mbox{such that} \ \textbf{Z}_{ii}=0$. After obtaining such $\textbf{Z}$, $abs(\textbf{Z})+abs(\textbf{Z}^T)$ is used as the affinity matrix (where $abs(\cdot)$ takes the absolute value of each entry in the matrix). Several techniques have been developed based on this idea. Sparse self-representation enforces the columns of $\mathbf{Z}$ to be sparse. $\ell_1$-minimization (as in Sparse Subspace Clustering (SSC) \cite{elhamifar2013sparse}
	) or Orthogonal Matching Pursuit (as in SSC-OMP \cite{dyer2013greedy}\cite{you2016scalable}) can be used to obtain such sparse representation. 
	{Least square regression (LSR)\cite{lu2012robust}} uses least-squares representations. 
	Elastic net Subspace Clustering (EnSC) \cite{you2016oracle} provides a mixture of $\ell_1$ and $\ell_2$ regularizations to obtain self-representations. 
	Few other techniques utilize low-rank self-representation like Low-Rank Recovery (LRR) \cite{liu2012robust} 
	and Low-Rank Subspace Clustering (LRSC) \cite{favaro2011closed}. Low-Rank Sparse Subspace Clustering (LRSSC) \cite{wang2013provable,wang2019provable} imposes low-rank constraint as well as sparsity constraint on the self-representation matrix. Another work \cite{lu2019subspace} uses block diagonal self-representation (BDR) 
	and performs better than several existing approaches. 
	
	There also exist several agglomerative hierarchical algorithms\cite{ma2007segmentation,leonardis2002multiple,fan2005multibody} for subspace clustering. Agglomerative (or bottom-up) hierarchical methods start with a large number of fine clusters and merge them progressively until a stopping criterion is reached. Agglomerative Lossy Compression (ALC)\cite{ma2007segmentation}, which is also a statistical subspace clustering method, finds the clustering that minimizes coding length needed to fit the data points with a Gaussian mixture. 
	A recent work, \cite{rahmani2017innovation} provides a new approach called Innovation Pursuit, which is an iterative method but can be integrated with spectral clustering to provide a new class of spectral clustering-based techniques. Currently, neural network-based clustering approaches are gaining popularity \cite{min2018survey}. 
	Especially, auto-encoder architecture \cite{song2013auto} is used to obtain sparse \cite{ji2017deep} and low rank \cite{chen2018subspace} representation for subspace clustering. These techniques can recover non-linear low dimensional structures underlying the data.
	
	
	Recently, the distributions of angles between data points \cite{cai2013distributions} have been used in \cite{menon2019structured} to develop a parameter free technique for outlier detection in high dimensions. {While some previous works in subspace clustering \cite{heckel2015robust,rahmani2017coherence,gitlin2018improving,lipor2017subspace} rely on the statistical distribution of angles, they do so only through the use of mean and all these works involve use of prior knowledge of the number of clusters (like in \cite{gitlin2018improving}) and/or involve the prior setting of one or more parameters (like in \cite{lipor2017subspace}). In contrast, the proposed work utilizes the entire distribution of angles, providing improved performance while also avoiding the need to tune any parameters.} 
	The proposed algorithm exploits %
	the difference in the statistical distributions of angles subtended by the points within a subspace and of angles subtended by the points from different subspaces 
	for achieving parameter free clustering.
	
	\subsection{Motivation}
	Many clustering algorithms require the user to supply the number of clusters to be formed beforehand. 
	In many situations, fixing the number of clusters apriori is not a good choice, especially when the knowledge about the dataset is limited. For example, in gene expression datasets, the number of clusters to be prefixed is not so clear\cite{dopazo2001methods}. There are some clustering algorithms which require one or several parameters, if not the number of clusters. Self-representation based techniques require regularization parameters \cite{elhamifar2013sparse} to be set along with the number the subspaces. Even neural network-based clustering methods require the setting of several hyperparameters. 
	There are also methods which tune for unknown parameters in the model - for instance, $\lambda$-means clustering \cite{comiter2016lambda} tunes for the parameter $\lambda$ in DP-means algorithm\cite{kulis2011revisiting}, {a general clustering algorithm used to cluster data generated by Dirichlet Process}. {Similarly, ALC \cite{ma2007segmentation} doesn't need the number of clusters apriori but requires the user to provide the distortion level $\epsilon$. Different $\epsilon$ will result in a different number of clusters in the output clustering, and hence we need to tune for $\epsilon$ for the data in hand}. When an algorithm requires one or more free parameters, the user has to set them using either cross-validation or prior knowledge about the dataset. However, parameter tuning\cite{claesen2015hyperparameter} is a difficult task, and any incorrect tuning of parameters would result in huge performance degradation.
	

	{There are several techniques in the literature to determine the number of clusters for conventional distance-based clustering of low-dimensional data\cite{pelleg2000x,tibshirani2001estimating,salvador2004determining,gupta2018fast}.  Several approaches were proposed for estimating the number of subspaces, and these estimates can then be used as an input for the subspace clustering algorithm. In \cite{liu2012robust}, the authors suggested a way to obtain the number of subspaces by soft thresholding the singular values of the Laplacian matrix of the affinity matrix. But this approach needs to set a threshold $\tau$. In \cite{heckel2015robust}, it is suggested to estimate the number of subspaces by looking for the maximum separation between successive eigenvalues of the Laplacian matrix. Though this eigen-gap heuristic approach doesn't involve any threshold, the technique is still dependent on the parameters which were used to obtain the affinity matrix. Also, this heuristic can fail when data points are noisy, and subspaces are closer\cite{von2007tutorial}.}
	
	Recently, parameter free approaches have been developed in the areas of high dimensional outlier detection \cite{menon2019structured}, sparse signal recovery \cite{DBLP:conf/icml/KallummilK18}\cite{kallummil2018high}, robust regression \cite{kallummil2018noise}, and these were shown to have results comparable with those which use the explicit knowledge about the parameters. 
	Hence, we look for a parameter free method for subspace clustering.
	\subsection{Contributions}
	Given the high-dimensional data points coming from the union of several low-dimensional subspaces, we propose an algorithm to achieve a clustering without the knowledge of the true number of clusters, $L$. This essentially consists of two steps. First, we start with an initial clustering with a large number of clusters such that each cluster {is likely to} contain points from one subspace. In the second stage, the clusters are merged to arrive at {a final clustering}. Given an initial clustering, we propose a method based on the statistical distance between the distributions of the angles subtended by the data points to find the {final} clustering without having to prefix the number of clusters. {This makes the proposed algorithm an agglomerative hierarchical method}. We also suggest a parameter free approach for initial clustering. 	
	
	The performance of the proposed algorithm is compared with state-of-the-art subspace clustering algorithms, namely, SSC-ADMM\cite{elhamifar2013sparse}, SSC-OMP\cite{you2016scalable}, EnSC\cite{you2016oracle}, TSC\cite{heckel2015robust}, ALC\cite{ma2007segmentation} and LRR\cite{liu2010robust} 
	and another recent algorithm namely, BDR-Z\cite{lu2019subspace}. We compare the algorithms in terms of Clustering Error (CE) and Normalized Mutual Information (NMI) on  synthetic as well as real datasets like Gene Expression Cancer RNA-Seq\cite{weinstein2013cancer}\cite{Dua:2019}, Novartis multi-tissue\cite{novartis}, Wireless Indoor Localization\cite{rohra2017user}\cite{Dua:2019}, Phoneme\cite{hastie1995penalized}, MNIST\cite{lecun1998gradient}, Extended Yale-B\cite{GeBeKr01,KCLee05} and Hopkins-155\cite{tron2007benchmark}. It is observed that the proposed algorithm performs on par with other methods even without the need for the number of clusters or any other parameters and has lower computational complexity.
	\subsection{Organization of the paper}
	The rest of the paper is organized as follows. In Section \ref{sProblem_and_essential_definitions}, we set up the problem and provide the definitions and notations used in this paper, along with a brief overview of the algorithm. The proposed parameter free algorithm for subspace clustering is introduced in Section \ref{sAlgorithm}. In Section \ref{sTheoretical_Analysis}, we provide the 
	analysis of our algorithm under certain assumptions on the data model. Section \ref{sNumerical_validations} provides numerical results on synthetic and real datasets and compares the performance of our algorithm with other existing algorithms. {In Section \ref{valid_results}, we discuss 
	the utility of the proposed algorithm and its pros and cons in light of all the conducted experiments.}
	
	\section{Overview and Essential Definitions}\label{sProblem_and_essential_definitions}
	The problem that we are addressing in this work is to find the clustering of a dataset comprising of high dimensional points coming from a union of subspaces. Suppose we have $N$ data points 
	$\mathbf{m}_i \in \mathbb{R}^n, \forall i \in\{ 1,2,\ldots,N\}$ and each $\mathbf{m}_i \in \mathcal{U}_1\cup\mathcal{U}_2\cup \ldots \cup\mathcal{U}_L$, where $\mathcal{U}_k$'s, $k \in \{1,2,\ldots ,L\}$, $L \ll N$ are subspaces in $\mathbb{R}^n$ with dimensions $r_k$'s respectively. We assume that there are $N_k$ points from the subspace $\mathcal{U}_k$. 
	\begin{definition}\label{dclus}
		A clustering of the dataset with $K \geq 1$ clusters is defined as $\mathcal{C}_K = \{I_1,I_2,\ldots,I_K\} $, where $I_j$'s are mutually disjoint index sets such that $\forall j = 1,2,\ldots ,K$, $I_j \subseteq \{1,2,\ldots,N\}$ and $I_j \neq \O$ with $\bigcup\limits_{j = 1}^{K} I_j = \{1,2,\ldots,N\}$. We will call $I_j$'s as constituent clusters. If $i\in I_j$, we say that $j$ is the cluster label of the point $\mathbf{m}_i$.
	\end{definition}
	\begin{definition}\label{dtrue}
		The true clustering of the dataset is defined as the clustering $\mathcal{C}_L^* = \{I_1,I_2,\ldots,I_L\}$, where $\forall j = 1,2, \ldots ,L, \  I_j  = \{i \text{	}| \text{	}\mathbf{m}_i \in \mathcal{U}_k \text{ for some	} k \in 1,2,\ldots, L\}$ and $ |I_j| = N_k$, where $|\cdot|$ denotes the cardinality of the set. i.e., each constituent cluster contains indices of all the points from a subspace and only the points from that subspace.
	\end{definition}
	
	Here, we will be working with angles subtended by high dimensional data points and their distributions. We will be using the normalized data points $\mathbf{x}_i = \frac{\mathbf{m}_i}{\|\mathbf{m}_i\|_2}$, where $\|\cdot\|_2$ denotes the $\ell_2$ norm. These points $\mathbf{x}_i \in \mathbb{R}^n, \forall i \in\{ 1,2,\ldots,N\}$ will lie in the high dimensional hypersphere $\mathbb{S}^{n-1}$. 
	Let $\theta_{ij}$ denote the angle between two data points $\mathbf{m}_i$ and $\mathbf{m}_j$, i.e., 
	\begin{equation}\label{etheta}
	\theta_{ij} = \cos^{-1}(\mathbf{x}_i^T\mathbf{x}_j) \hskip50 pt \theta_{ij} \in [0,\pi].
	\end{equation}
	  
	In this work, we hypothesize that the angles subtended by the points within the subspace and the angles subtended by the points between subspaces come from different statistical distributions and these distributions can be well approximated in many cases by distinct Gaussians with a different location and scale parameters. This is true for a random subspace model\footnote{This will be introduced in Section \ref{sAlgorithm}} and also holds for many real datasets. The proposed algorithm looks at the statistical distances between the distributions of within-cluster and between-cluster angles, where the distributions are estimated through the available angle samples. The next few definitions correspond to these samples and the related estimates, with $S^{(j)}$, $j=1,2,\ldots, |S|$ denoting the $j^{th}$ element in a set $S$. 
	\begin{definition}
		The within-cluster angle set for constituent cluster $I_k$ is defined as
		\begin{equation}\label{ewithin}
		W_k = \{\theta_{ij} \ | \ i,j \in I_k,\ i < j\}.
		\end{equation}
	\end{definition}
	If $|I_k| = t_k$, then $|W_k| = {t_k \choose 2} $ is the number of unique angles in the set.
	\begin{definition}[Within-cluster estimates]
		Given a within-cluster angle set $W_k$ for $I_k$, suppose $W^t_k \subseteq W_k$ with $|W^t_k| = t$, then the estimates corresponding to this subset are given as
		\begin{equation}\label{ewithinclus}
		\begin{aligned}
		\mu_{w_{k}t} &= \frac{1}{t} \sum\limits_{j}W^{t,{(j)}}_k\qquad\text{and}\\
		\sigma_{w_{k}t}^{2}  &= \frac{1}{t -1} \sum\limits_{j} \left(W^{t,{(j)}}_k - \mu_{w_{k}t}\right)^2.
		\end{aligned}
		\end{equation}
	\end{definition}
	Here, $\mu_{w_{k}t}$ and $\sigma_{w_{k}t}^{2}$ are respectively sample mean and sample variance of elements of the set $W^t_k$ and $W^{t,{(j)}}_k$ is the $j^{th}$ element of the set $W^{t}_k$.
	\begin{definition}
		The between-cluster angle set for two constituent clusters $I_k$ and $I_l$ is defined as
		\begin{equation}\label{ebw}
		B_{kl} = \{\theta_{ij} \text{	}| \text{	} i \in I_k, \ j \in I_{l}\}.
		\end{equation}
	\end{definition}
	Clearly, $|B_{kl}| = t_kt_l$, the number of possible cross angles. 
	\begin{definition}[Between-cluster estimates]
		Given a between-cluster angle set $B_{kl}$ for $I_k$ and $I_l$, suppose $B^t_{kl} \subseteq B_{kl}$ with $|B^t_{kl}| = t$, then the estimates corresponding to this subset are given as
		\begin{equation}\label{ebwclus}
		\begin{aligned}
		\mu_{b_{kl}t} &= \frac{1}{t} \sum\limits_{j }B^{t,{(j)}}_{kl}\qquad\text{and}\\
		\sigma_{b_{kl}t}^{2}  &= \frac{1}{t -1} \sum\limits_{j } \left(B^{t,{(j)}}_{kl} -\mu_{b_{kl}t} \right)^2.
		\end{aligned}
		\end{equation}
	\end{definition}
	Here, $\mu_{b_{kl}t}$ and $\sigma_{b_{kl}t}^{2}$ are respectively sample mean and sample variance of elements of the set $B^t_{kl}$ and $B^{t,{(j)}}_{kl}$ is the $j^{th}$ element of the set $B^t_{kl}$.
	
	Bhattacharyya distance is a very popular measure used for measuring distances between probability distributions \cite{bhattacharyya1943measure}. We use its empirical version \cite{4309450} as our key metric.
	\begin{definition}\label{ddist}
		Given two constituent clusters $I_k$ and $I_l$, the distance $D_{kl}$ between them is defined as the Bhattacharyya distance \cite{bhattacharyya1943measure} between the distribution of angles in $W_k$ and $B_{kl}$, i.e., $D_{kl} = D_B(\theta_{W_k}, \theta_{B_{kl}})$, where $\theta_{W_k}$ is the statistical distribution of angles in $W_k$ and $\theta_{B_{kl}}$ is the statistical distribution of angles in $B_{kl}$ and $D_B()$ is the Bhattacharyya distance between them. The empirical version used here is defined as
		\begin{equation}\label{estatalter}
		d_{kl} \! =\! \dfrac{1}{4} \!\left[\dfrac{	(\mu_{w_{k}t} - \mu_{b_{kl}t} )^2}{\sigma^2_{w_{k}t}+\sigma^2_{b_{kl}t}}\!+\! \log_{e}\!\left(\!\frac{1}{4}\!\left[\!\dfrac{\sigma^2_{w_{k}t}}{\sigma^2_{b_{kl}t}} \!+\! \dfrac{\sigma^2_{b_{kl}t}}{\sigma^2_{w_{k}t}}\!\right]\!\!+\!\frac{1}{2}\!\right)\!\right]\!\!.
		\end{equation}
	\end{definition}
	In the algorithm, We start from a fine clustering and merge those clusters which are closest in terms of the empirical Bhattacharyya distance until we reach a final clustering. The closeness is measured in terms of the scores, as defined below. 
	\begin{definition}
		Score of a constituent cluster $I_j$ in a clustering $\mathcal{C}_K$ is given by
		\begin{equation}\label{escrI}
		\eta_j = \underset{l = 1,2,\ldots, K , \ l\neq j}{\min} \ d_{jl}.
		\end{equation}
		
		Also, we define the partner of a 
		cluster as the one which produces its score. i.e., if $j' = \underset{l = 1,2,\ldots, K, \ l\neq j}{\arg \min}\text{	}d_{jl}$, then $I_{j'}$ is the partner of $I_j$.
	\end{definition} 
	\begin{definition}
		Score of a clustering $\mathcal{C}_K$ is given by
		\begin{equation}\label{escrclus}
		\gamma_K =\underset{i  = 1,2, \ldots ,K}{\min}\text{	} \eta_i .
		\end{equation}	
		 
		 Let $i^* =\underset{i  = 1,2, \ldots, K}{\arg\min}\text{	} \eta_i$ and also let $I_{j^*}$ be the partner of $I_{i^*}$. Then, we call $(I_{i^*}, I_{j^*})$ as a mergeable pair of $\mathcal{C}_K$.
	\end{definition}
	The algorithm is explained in detail in Sections \ref{sAlgorithm} and \ref{sTheoretical_Analysis}.\\[1ex]	
	\noindent\textbf{Other Notations:} 
	$\mathbb{P}(\cdot)$ denotes the probability measure. $\mathcal{N}(\mu, \sigma^2)$ denotes the normal distribution with mean $\mu$ and variance $\sigma^2$. Let $F_{\mathcal{N}}(\cdot)$ denote the cdf of the standard normal distribution, 
	$\Gamma(\cdot)$ denote the gamma function and $\chi^2_k$ denote the standard chi-squared distribution with $k$ degrees of freedom. { $F_{\chi^2_{k}}(\cdot)$ denotes the cdf of  $\chi^2_k$ distribution 
	and $F_{\chi^2(1,\lambda)}(\cdot)$ denotes the cdf of a non-central $\chi^2$ distribution with 1 dof and the non-centrality parameter $\lambda$}. 
	$\beta'(a,b)$ denotes beta prime distribution with parameters $a$ and $b$ and $F_{\beta'(a,b)}(\cdot)$ is its cdf. 
	Also, $w.p$ indicates `with probability'. $\lfloor x\rfloor$ is the floor of $x$ and $\lceil x\rceil$ is the ceiling of $x$. 
	$O()$ denotes the Big O notation for complexity. $abs(x)$ denotes the absolute value of $x$. 
	
	\section{Algorithm}\label{sAlgorithm}
	In this section, we will explain the proposed algorithm for clustering. 
	First, we will assume that some process gives us an initial fine clustering $\mathcal{C}_P$ and develop an algorithm for merging. Then, we will discuss a method that will select the appropriate clustering from the set of outputs after the merging process such that the final clustering estimate is close to the true clustering. We will also discuss possible initial clustering methods. {Here, we give a theoretical framework which forms the basic idea of our proposal. The exact definitions of the distances we used and the derivation of the thresholds can be found in Section \ref{sTheoretical_Analysis}.}
	\subsection{Algorithm for merging}
	Suppose we have a fine clustering $\mathcal{C}_P$, with $P\gg L$. The proposed algorithm runs through from $P$ to $2$, by merging clusters and then selects the appropriate clustering through methods described later. First, we will see the merging process starting with $K = P$.
	\begin{itemize}
		\item[1.] Let the current clustering be $\mathcal{C}_K$ with constituent clusters $I_1,I_2 ,\ldots, I_K$. For each $I_j$, calculate the distances between constituent clusters as per Definition \ref{ddist} and then for each $I_j$, find its score $\eta_j$ using (\ref{escrI}) and also get their respective partners. 
		\item[2.] Calculate the score of the clustering $\gamma_K$ as in (\ref{escrclus}) and find the mergeable pair in the current clustering $\mathcal{C}_K$. Let the mergeable pair be $(I_{k_1},I_{k_2})$.
		\item[3.] Merge clusters $I_{k_1}$ and $I_{k_2}$ and form the new clustering $\mathcal{C}_{K-1}$ with $K-1$ constituent clusters.
		\item[4.] Repeat steps $1 - 3$ until $K=2$. 
	\end{itemize}

	In this merging, we form a series of clusterings $\mathcal{C}_P,\mathcal{C}_{P-1},\ldots, \mathcal{C}_2$ where each subsequent clustering is formed by merging the mergeable pair in the previous clustering, or in other words we simply combine the closest clusters in terms of the distance between distributions of the within-cluster angles and between-cluster angles. 
	
	The intuition behind this merging process is as follows. We hypothesize that the angles between points from the same subspace come from one statistical distribution, and the angles between points coming from different subspaces follow a different distribution. A theoretical treatment of this hypothesis and the motivation behind it is provided in Section \ref{sTheoretical_Analysis}. 
	This hypothesis implies that, when there are constituent clusters with points from the same subspace in them, the statistical distance between the distributions of angles within the constituent cluster and between the constituent clusters would be close to $0$, provided we have enough angle samples in each set.
		
	To clarify, suppose we look at the clustering $\mathcal{C}_P$. Take the constituent cluster $I_1$ and suppose that $I_{k}$ also contains only points from the same subspace, then the angles in $W_1$ and $B_{1k}$ come from the same distribution. This means that we have a low value close to $0$ for $d_{1k}$, which is the measure of divergence between the distributions of angles in $W_1$ and $B_{1k}$. On the other hand, if $I_j$ is a constituent cluster with points from a different subspace to those in $I_1$, then the angles in $W_1$ and $B_{1j}$ come from different distributions, and hence the distance $d_{1j}$ will be high and bounded away from $0$. We calculate $\eta_1$ as the minimum distance made by the constituent cluster $I_1$. When the clustering contains at least another constituent cluster with only points from the same subspace as is the case above, then the partner of that constituent cluster will be one of those clusters with only points from the same subspace. i.e., suppose for the above case, let $I_{k}$, $I_{l}$ and $I_{m}$ have only points from the same subspace as in $I_1$, then the partner of $I_1$ will be either $I_{k}$, $I_{l}$ or $I_{m}$. Hence, when we look at the bigger picture of a clustering, whenever a clustering contains at least a pair of constituent clusters having only points from the same subspace, the mergeable pair shall contain only points from the same subspace and the clustering score, $\gamma_K$ will be very close to $0$. In each step, we merge the closest clusters in terms of $d_{kl}$, which ensures that points from the same subspace get clustered together as we keep merging. The merging process is summarized in Algorithm \ref{alg_merge}.
	\begin{algorithm}[h]
		\caption{Procedure for Merging} \label{alg_merge}
		\textbf{Input}: Initial clustering $\mathcal{C}_P$, normalized data matrix $\mathbf{X}$.\\
		\textbf{Initial calculation}: Calculate $\theta_{ij}\ \forall i,j = 1,2,\ldots, N$ as in (\ref{etheta}).\\
		\textbf{Initialization}: $K=P$.
		\begin{algorithmic}[1]
			\State Calculate $\gamma_K$ for the current clustering $\mathcal{C}_K$ as in (\ref{escrclus}).
			\State Merge the mergeable pair in $\mathcal{C}_K$ and form $\mathcal{C}_{K-1}$.
			\State Repeat steps $1 - 2$ until $K = 2$.
		\end{algorithmic}
		\textbf{Output}: Clusterings $\mathcal{C}_P,\mathcal{C}_{P-1},\ldots, \mathcal{C}_2$.
	\end{algorithm}
	\subsection{Selecting optimal clustering}
	If we start with a clustering $\mathcal{C}_P$ such that each constituent cluster in $\mathcal{C}_P$ only contains indices of points from one subspace, then through the merging process described in the previous subsection, at some point, we will arrive at a clustering $\mathcal{C}_{\hat{L}}$. At this point, no two constituent clusters have points from the same subspace, which means that no cluster pair can form a distance $d_{ij}$ that is close to $0$. Hence, the cluster score $\gamma_{\hat{L}}$ will take a jump from a value close to zero to a higher magnitude. This is what we try to exploit in our algorithm to find $\hat{L}$.
	
	Also note that here we compute the statistical distances using the angle samples we have in the within-cluster and between-cluster sets, i.e., for constituent clusters $I_i$ and $I_j$, $\theta_{W_i}$ and $\theta_{B_{ij}}$ are estimated distributions and $d_{ij}$ is the empirical Bhattacharyya distance. Suppose we have $t$ angle samples, we can state the following on the behaviour of $\gamma_K$:
	\begin{itemize}
		\item[a)] Suppose for a clustering $\mathcal{C}_K$, $I_i$ and $I_j$ contain only points from the same subspace $\mathcal{U}_a$ such that the angles between the points in $\mathcal{U}_a$ are distributed with a distribution $p_{\mathcal{U}_a}$, then as the number of angle samples $t \to \infty$,  $\theta_{W_i} \to p_{\mathcal{U}_a}$ and $\theta_{B_{ij}} \to p_{\mathcal{U}_a}$ $\Rightarrow$ $d_{ij} \to 0$. In other words, $d_{ij}$ will be very close to $0$ if one has a large number of angle samples for estimating the distribution. 
		\item[b)] Hence in $\mathcal{C}_K$, for $I_i$, if $\exists I_j$ as described in a), then $\eta_i \to 0$ given a large number of angle samples. This is because of the definition of $\eta_i$, which is the minimum distance that a constituent cluster makes. 
		\item[c)] From the above, for a clustering $\mathcal{C}_K$, if there exists at least one such $I_i, I_j$ pair as a), then $\gamma_K  = \underset{l  = 1,2, \ldots, K}{\min}\text{	} \eta_l \to 0$ at large enough number of angle samples.
		\item[d)] Suppose that such a pair as a) does not exist in $\mathcal{C}_K$, i.e., for any $i$ and $j$, $I_{i}$ and $I_{j}$ contain points from different subspaces, say $\mathcal{U}_a$ and $\mathcal{U}_b$ respectively. Then, as $t \to \infty$,  $\theta_{W_i} \to p_{\mathcal{U}_a}$ and $\theta_{B_{ij}} \to q$, where $q$ is the distribution of angles between points from different subspaces. Then, as $t$ increases, $d_{ij} \to D_B(p_{\mathcal{U}_a}, q)$ and since these are different distributions, $d_{ij}$ is bounded away from $0$ $\forall i,j$, which in turn leads to $ \gamma_K$ being bounded away from $0$.
		\item[e)] Also note that, whenever $I_{i}$ and $I_{j}$ contains a mixture of points from different subspaces, we cannot make an assertion on the nature of distributions in each set and hence the distance metric $d_{ij}$ becomes unpredictable.
	\end{itemize}

	Hence while merging, suppose we arrive at a true clustering at $K = \hat{L}$, then there exists no more mergeable pair which have points from the same subspace. So, we can state the following assuming that $t \to \infty$.
	\begin{itemize}
		\item For $K > \hat{L}$, $\gamma_K \to 0$.
		\item For $K = \hat{L}$, $\gamma_K$ is bounded away from zero.
		\item For $K < \hat{L}$, $\gamma_K$ behaviour is unknown.
	\end{itemize}

	Now, we will describe the 
	method that can identify $\hat{L}$ from the calculated $\gamma_K$'s.
	This method is essentially a thresholding scheme, which uses a threshold $\zeta_{K}$ on the scores $\gamma_K$ and looks for the first crossing of this threshold as our {final} clustering. The algorithm is summarized in Algorithm \ref{alg_zeta}. 
	\begin{algorithm}[h]
		\caption{Thresholding with $\zeta_{K}$} \label{alg_zeta}
		\textbf{Input}: $\mathcal{C}_P,\mathcal{C}_{P-1},\ldots,\mathcal{C}_2$ and associated $\gamma_K$'s.
		\begin{algorithmic}[1]
			\State For each $K$, calculate $\zeta_{K}$ as in (\ref{ethr}).
			\State $\hat{L} = \max \{K \text{	}| \text{	} \gamma_K > \zeta_{K}\}$.\label{eq_hat_L}
			\State $\hat{\mathcal{C}}_{\hat{L}} = \mathcal{C}_{\hat{L}}$.
		\end{algorithmic}
		\textbf{Output}: Estimated optimal clustering $\hat{\mathcal{C}}_{\hat{L}}$.
	\end{algorithm}
	
	In Section \ref{sTheoretical_Analysis}, we formulate a threshold $\zeta_{K}$ theoretically, which with a very high probability ensures that the scores $\gamma_K \leq \zeta_{K}$ whenever the mergeable pair in $\mathcal{C}_K$ contains only points from the same subspace, i.e., through this threshold we ensure that whenever $K > \hat{L}$, 
	the scores $\gamma_K \leq \zeta_{K}$. So, $\hat{L}$ is the first instance where the score exceeds the threshold, and we find that as in step \ref{eq_hat_L} in Algorithm \ref{alg_zeta}. The formulation of the high probability threshold involves a statistical analysis on the distances and is discussed in Section \ref{stheory}, where we derive the threshold $\zeta_{K}$. For this derivation, we use a data model described in Section \ref{sassuumpttions}, where we also provide the motivation behind the model.
	\subsection{Initial Clustering}\label{sinit_clust}
	The algorithm discussed previously requires as input an initial fine clustering $\mathcal{C}_P$ with $P \gg L$. Here, we discuss possible methods to get this clustering from data. One way is to use well-known algorithms for clustering, setting them for over-estimating the number of clusters. For instance, one could use K-Means clustering by setting a large value for K. Here we will use a method based on the closeness of points while keeping a minimum of 3 points per cluster, which is the only assumption made. The following steps describe the algorithm for initial clustering.
	\begin{itemize}
		\item[1.] For each data point, find the two closest points in terms of the acute angles between them, i.e. $\cos^{-1}(abs(\mathbf{x}_i^T\mathbf{x}_j))$. Let us call them allies of a point.
		\item[2.] Starting from any point chosen randomly, form clusters with the point and it's two allies. Run through the points forming such clusters, avoiding repetition of points in clusters. Here, a new cluster is formed only when a point and both its allies are not already allocated to a cluster. Hence, in this run, a lot of points go unallocated. 
		\item[3.] For all the points left out, add them to the cluster of its closest ally, if it has a cluster allocated or else add them to the cluster of its second ally. After this run, all points are added to some cluster. 
	\end{itemize}

	This forms our initial clustering, with at least 3 points in each constituent cluster. Please note that this initialization scheme does not guarantee that the initial clustering has constituent clusters with points only from the same subspace. {This method of initial clustering is similar to \cite{park2014greedy} and could be improved upon as part of future work.} 
	\section{Theoretical Analysis}\label{sTheoretical_Analysis}
	To provide theoretical results, we require a model for data points from different subspaces. In this section, we will describe the assumptions that we make on the data model and the motivation behind this assumption. Then, under this model, we analyse the algorithm theoretically and derive the threshold $\zeta_{K}$. {Here we show theoretically that, under the Gaussian assumption on the nature of distributions of angles, the score $\gamma_K \leq \zeta_K$ with high probability, whenever a mergeable pair exists in the current clustering. This will result in the algorithm achieving a true clustering with high probability.}
	\subsection{Assumptions used and their motivations}\label{sassuumpttions}
	The distribution of angle between two high dimensional points is studied in detail in \cite{cai2013distributions}. When independent points are chosen uniformly at random from $\mathbb{S}^{n-1}$, the distribution of the angles  between any two of them is approximately Gaussian with mean $\frac{\pi}{2}$ and variance $\frac{1}{n-2}$ \cite{cai2013distributions}. Even if all the points are independent, the angles which involve the same point are only pairwise independent \cite{cai2012phase}. Let us look at the following model:
	\begin{model}\label{model_ss}
		The subspaces $\mathcal{U}_i$'s, $i = 1,2,\ldots,L$ are chosen uniformly at random from the set of all $r_i$ dimensional subspaces respectively and the normalized points in each subspace are sampled uniformly at random from the 
		$\mathcal{U}_i\cap\mathbb{S}^{n-1}$. 
	\end{model}
	{Note that Model \ref{model_ss} is the fully-random model as used in \cite{heckel2015robust,soltanolkotabi2012geometric}.} In this model, we can use results proved in \cite{cai2013distributions,menon2019structured} to state the following:
	\begin{lemma}\label{lemmaold}
		In Model \ref{model_ss}, let $\mathcal{C}_L^*$ be a true clustering.
		\begin{itemize}
			\item[a)] When $i, j \in I_k$, with $I_k$ corresponding to the subspace $\mathcal{U}_a$,  $\theta_{ij}$'s are identically distributed with an expected value of $\frac{\pi}{2}$ and its pdf is given by
			$h_{r_a}(\theta) = \dfrac{1}{\sqrt{\pi}}\dfrac{\Gamma\left(\frac{r_a}{2}\right)}{\Gamma\left(\frac{r_a-1}{2}\right)} (\sin\theta)^{r_a-2}, \ \ \theta \in [0, \pi]$.
			\item[b)] When $i\in I_k$, $j \in I_l$, then $\theta_{ij}$'s are identically distributed with an expected value of $\frac{\pi}{2}$ and its pdf is given by
			$h_{n}(\theta) = \dfrac{1}{\sqrt{\pi}}\dfrac{\Gamma\left(\frac{n}{2}\right)}{\Gamma\left(\frac{n-1}{2}\right)} (\sin \theta)^{n-2}, \ \ \theta \in [0, \pi]$. {Also, $\theta_{ij}$ converges in distribution to $\mathcal{N}\left(\frac{\pi}{2},\frac{1}{n-2}\right)$ as $n \to \infty$ and the rate of convergence is of $O\left(\frac{1}{n}\right)$.}
		\end{itemize}
	\end{lemma}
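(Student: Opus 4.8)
The plan is to handle the two parts separately, reducing each to the known law of the inner product between two independent uniformly random unit vectors on a sphere of the appropriate dimension, and then to quote the Gaussian approximation of \cite{cai2013distributions} for part b).

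For part a), I would fix a subspace $\mathcal{U}_a$ of dimension $r_a$, choose an orthonormal basis for it, and thereby identify $\mathcal{U}_a\cap\mathbb{S}^{n-1}$ isometrically with $\mathbb{S}^{r_a-1}$. Under Model \ref{model_ss} the normalized points lying in $\mathcal{U}_a$ become i.i.d.\ uniform on $\mathbb{S}^{r_a-1}$, so for any $i,j\in I_k$ the pair $(\mathbf{x}_i,\mathbf{x}_j)$ has the same joint law; in particular all the $\theta_{ij}$ with $i,j\in I_k$ are identically distributed. Conditioning on $\mathbf{x}_i$ and invoking rotational invariance of the uniform measure on $\mathbb{S}^{r_a-1}$ gives $\mathbf{x}_i^T\mathbf{x}_j\overset{d}{=}(\mathbf{x}_j)_1$, the first coordinate of a uniform point on $\mathbb{S}^{r_a-1}$, whose density on $[-1,1]$ is $\frac{\Gamma(r_a/2)}{\sqrt{\pi}\,\Gamma((r_a-1)/2)}(1-t^2)^{(r_a-3)/2}$. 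The substitution $t=\cos\theta$, $1-t^2=\sin^2\theta$, $|dt|=\sin\theta\,d\theta$ converts this into the density $h_{r_a}(\theta)$ on $[0,\pi]$. Since $h_{r_a}(\theta)=h_{r_a}(\pi-\theta)$ the law is symmetric about $\pi/2$, which yields $\mathbb{E}[\theta_{ij}]=\pi/2$.

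For part b), the crucial observation is that a point drawn uniformly from $\mathcal{U}_l\cap\mathbb{S}^{n-1}$, with $\mathcal{U}_l$ itself chosen uniformly at random among $r_l$-dimensional subspaces, is \emph{marginally} rotationally invariant on $\mathbb{S}^{n-1}$: the distribution of $\mathcal{U}_l$ is invariant under orthogonal transformations, and uniform sampling within a subspace commutes with them. By uniqueness of the rotationally invariant probability measure on $\mathbb{S}^{n-1}$, the marginal law of $\mathbf{x}_j$ is uniform on $\mathbb{S}^{n-1}$, and it is independent of $\mathbf{x}_i$ since $\mathcal{U}_l$ is chosen independently of $\mathcal{U}_k$ and of the sampling inside it. Hence $\mathbf{x}_i^T\mathbf{x}_j\overset{d}{=}(\mathbf{x}_j)_1$ with $\mathbf{x}_j$ uniform on $\mathbb{S}^{n-1}$; the same change of variables as above produces the density $h_n(\theta)$ and, by symmetry, mean $\pi/2$. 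The convergence in distribution to $\mathcal{N}\!\left(\frac{\pi}{2},\frac{1}{n-2}\right)$ together with the $O(1/n)$ rate then follows verbatim from the corresponding statement of \cite{cai2013distributions} (as also recalled in \cite{menon2019structured}), because $\theta_{ij}$ is exactly the angle between two independent uniform points on $\mathbb{S}^{n-1}$ studied there.

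The main obstacle is precisely the reduction in part b): one must argue carefully that, although each $\mathbf{x}_j$ is confined to a low-dimensional subspace, averaging over the random choice of that subspace restores full rotational invariance, so that the pairwise angle law is indistinguishable from the fully generic case on $\mathbb{S}^{n-1}$. This is what makes the between-cluster angles behave like $h_n$ rather than like a low-dimensional $h_r$, and it is the separation the merging criterion exploits. I would also flag that this argument controls only the marginal law of each individual $\theta_{ij}$; angles sharing a common endpoint are merely pairwise independent (cf.\ \cite{cai2012phase}), which is why the concentration arguments used later to derive $\zeta_K$ need to be handled with extra care.
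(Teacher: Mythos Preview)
Your argument is correct and covers the same ground as the paper, but the emphasis is inverted. For the density statements in a) and b) the paper simply invokes existing lemmas (Lemma~12 of \cite{cai2013distributions} and Lemmas~1, 2 and 9 of \cite{menon2019structured}), whereas you unpack those citations into a self-contained rotational-invariance argument; in particular, your observation that averaging over the Haar-random choice of $\mathcal{U}_l$ restores full $O(n)$-invariance of the marginal law of $\mathbf{x}_j$ is exactly the content of Lemma~9 in \cite{menon2019structured} that the paper cites without elaboration. Conversely, for the Gaussian limit and the $O(1/n)$ rate you defer to \cite{cai2013distributions}, while the paper actually writes out the computation: it sets $\tau=\sqrt{n-2}\,(\theta_{ij}-\pi/2)$, computes the log-density $\log g(\tau)=C_n+(n-2)\log\cos(\tau/\sqrt{n-2})$, Taylor-expands to get $C_n-\tau^2/2+O(1/n)$, and reads off both the $\mathcal{N}(0,1)$ limit and the rate. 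Your route is cleaner if one is willing to quote the rate; the paper's route makes the $O(1/n)$ claim fully self-contained, which matters because that specific rate is what justifies Remark~\ref{r1} and the Gaussian model used downstream.
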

	\begin{proof}
		Please refer to Appendix \ref{appendix1}.
	\end{proof}
	Also, we state the following remark.
	\begin{remark}\label{r1}
		The pdf $h_p(\theta)$ from Lemma \ref{lemmaold} can be approximated by the pdf of Gaussian distribution with mean $\frac{\pi}{2}$ and variance $\frac{1}{p-2}$, specifically for $p \geq 5$ as validated in \cite{cai2013distributions}. 
	\end{remark}
	
	\par Through Lemma \ref{lemmaold} and Remark \ref{r1}, we can see that in Model \ref{model_ss}, the angles between points coming from the same subspace and the angles between points of different subspaces both follow Gaussian distribution with different variances. 
	Model \ref{model_ss} 
	gives us a framework to work with. However, it is too restrictive. Hence, we generalize this model: We assume that the angles between points in the same subspace are approximately Gaussian distributed with some mean $\mu_w$ and variance $\sigma_w^2$ and those coming from points of different subspaces also approximately Gaussian distributed with some other mean $\mu_b$ and variance $\sigma_b^2$, with all of the parameters varying according to the data used and the pair of clusters considered. Essentially, the distributions of angles subtended by points in the same subspace and those from different subspaces have different distributions, both of which 
	can be well approximated by Gaussian distributions. 
	We can see that this assumption holds in many cases. Fig. \ref{fig_rna_angles} shows the histogram of within-cluster angles and between-clusters angles in Gene Expression Cancer RNA-Seq dataset \cite{weinstein2013cancer}\cite{Dua:2019}, which are approximately Gaussian distributed with different parameters\footnote{For this illustration, we have considered the clusters - LUAD and PRAD from the dataset.}.  This is the motivation behind making the following assumption in this work.
	\begin{center}
		\begin{figure}[h!]
			\includegraphics[width=\linewidth]{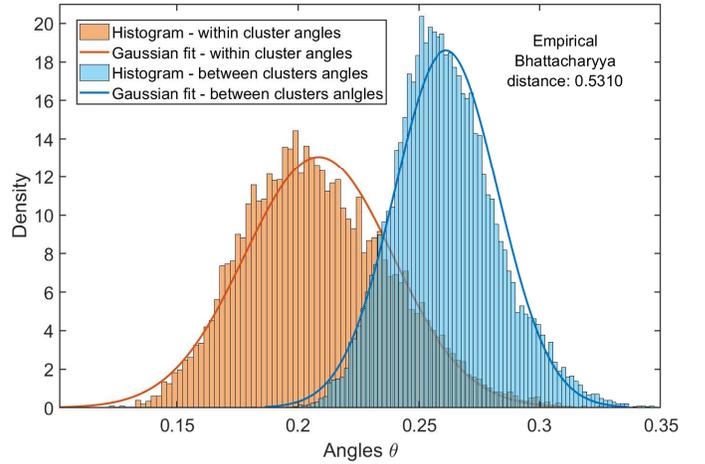}
			\caption{Distribution of within-cluster angles and between-clusters angles in Gene Expression Cancer RNA-Seq dataset}
			\label{fig_rna_angles}
		\end{figure}
	\end{center}
	\begin{Assumption}\label{amain1} 
For $\mathbf{x}_1,\mathbf{x}_2, \ldots \mathbf{x}_{N_a} \in \mathcal{U}_a$, the angles between them, $\theta_{ij}$'s are identically distributed as $ \mathcal{N}(\nu_{a},\rho^2_{a})$. {For any $\mathbf{x}_i \in \mathcal{U}_a$ and $\mathbf{x}_j \in \mathcal{U}_b$, the angle $\theta_{ij}$ is distributed as $ \mathcal{N}(\nu_{ab},\rho^2_{ab})$}. For any $i$, $j$, $k$ and $l$, the angles $\theta_{ij}$, $\theta_{ik}$ and $\theta_{il}$ are not mutually independent but are pairwise independent, i.e., $\theta_{ij}$ and $\theta_{ik}$ are independent, so are $\theta_{ij}$ and $\theta_{il}$.
	\end{Assumption} 
	\par{Please note that the model only assumes that the distributions of within-cluster angles and between-cluster angles are Gaussian and does not require the subspaces to be linearly independent.} Also, note that trivially the angles are mutually independent if they do not have a common point involved. We assume throughout this work that the angles formed by points within a subspace 
	have a Gaussian distribution %
	and those between points coming from different subspaces have a different Gaussian distribution. As seen in the previous section, the proposed algorithm for finding the {final} clustering from a given clustering with a large number of constituent clusters is based on distances between distributions of angles within and between constituent clusters.  We will model the angles using Assumption \ref{amain1} and derive the threshold $\zeta_{K}$ theoretically in the next subsection. 
	
	\subsection{Theoretical behaviour of scores}\label{stheory}
	For theoretical analysis, we will work under Assumption \ref{amain1}. Consider a clustering $\mathcal{C}_K$. Let $I_i$ and $I_j$ be two constituent clusters in $\mathcal{C}_K$. Let $|I_i| = \omega_i$ and $|I_j| = \omega_j$. Here, we will do the following:
	\begin{itemize}
		\item[i] Characterize $d_{ij}$ mathematically looking at its statistical properties when $I_i$ and $I_j$ contain only points from the same subspace $\mathcal{U}_a$ {and also when $I_i$ and $I_j$ contain points from different subspaces $\mathcal{U}_a$ and $\mathcal{U}_b$}.
		\item[ii] We will use this to develop the threshold $\zeta_{K}$, which is a high probability upper bound on $\gamma_K$ whenever there exists some $I_i,I_j \in \mathcal{C}_K$ which contains only points from the same subspace. 
	\end{itemize}
	
	When we use Assumption \ref{amain1}, $d_{ij}$ between constituent clusters $I_i$ and $I_j$ is the Bhattacharyya distance between two normal distributions. We calculate it empirically using angle samples as defined in (\ref{estatalter})
	. The properties of this are studied in detail in \cite{4309450}. 
	\par Under Assumption \ref{amain1}, we have only assumed pairwise independence of angles and not mutual independence when they involve the same data point. In this section, we will ensure that the estimates are generated by independent angles by designing the subsets $W^t_i$ and $B^t_{ij} $ as such. We want to ensure that the samples used for the estimates $\mu_{w_{i}t_{ij}}$ and $\mu_{b_{ij}t_{ij}}$ are independent samples and also ensure that $\mu_{w_{i}t_{ij}}$ and $\mu_{b_{ij}t_{ij}}$ are independent with respect to each other. For this, we have to ensure that we only pick at most two angles formed by a point in these estimates. Lemma \ref{lsubsets} in Appendix \ref{appendix2} designs such subsets $W^{t_i}_i $ and $B^{t_j}_{ij}$ with $t_i =  \lfloor \frac{\omega_i}{2} \rfloor$ and $t_j  = \min (\omega_i,\omega_j)$, where the samples in these subsets are independent. Let $t_{ij} = \min(t_i,t_j) = \min(\lfloor \frac{\omega_i}{2} \rfloor, \omega_j)$.
	
	 Further in the analysis, we will assume that we use only $t_{ij}$ samples each from the sets $W^{t_i}_i $ and $B^{t_j}_{ij}$ for getting our estimates. This helps in simplifying the analysis without compromising on its crux. Let $W^{t}_i $ and $B^{t}_{ij}$ be these sets, and $\mu_{w_{i}t_{ij}},\, \sigma^2_{w_{i}t_{ij}}$ and $\mu_{b_{ij}t_{ij}},\, \sigma^2_{b_{ij}t_{ij}}$ be the corresponding estimates. Let us also denote $X_{ij} = (\mu_{w_{i}t_{ij}} - \mu_{b_{ij}t_{ij}} )$, $Y_{ij} = \sigma^2_{w_{i}t_{ij}}+\sigma^2_{b_{ij}t_{ij}}$, $Z_{ij} = {\sigma^2_{w_{i}t_{ij}}}/{\sigma^2_{b_{ij}t_{ij}}}$ and also $U_{ij} = {X_{ij}^2}/{Y_{ij}}$ and $V_{ij} = Z_{ij} + \dfrac{1}{Z_{ij}}$. Then $d_{ij} = \frac{1}{4}\!\left(U_{ij}+ \log_{e}\left[\frac{V_{ij}}{4}+\frac{1}{2}\right]\right)$. We will first look at the distribution and properties of these estimates.
	{
	\begin{lemma}\label{ldistr}
		Under Assumption \ref{amain1}:
		\begin{itemize}
			\item [a)] If $I_i$ and $I_j$ contain points only from the same subspace $\mathcal{U}_a$, then the estimates $\mu_{w_{i}t_{ij}}$ and $\sigma^2_{w_{i}t_{ij}}$ are independent and so too are $\mu_{b_{ij}t_{ij}}$ and $\sigma^2_{b_{ij}t_{ij}}$. Also,
			\begin{align}\label{edists}
				\frac{t}{2\rho_{a}^2}X_{ij} ^2 \sim  \chi^2_{1} \, \text{		and		}\, \sigma^2_{w_{i}t_{ij}},\, \sigma^2_{b_{ij}t_{ij}}\! \sim\! \frac{\rho_a^2}{t_{ij}-1} \chi^2_{t_{ij}-1}.\!
			\end{align}
			\item[b)] If $I_i$ contain points only from subspace $\mathcal{U}_a$ and $I_j$ from $\mathcal{U}_b$:
			\begin{equation}\label{edistd}
			\begin{aligned}
			&\frac{t_{ij}}{\rho_a^2+\rho_{ab}^2}X_{ij}^2 \sim \chi^2\left(k = 1, \lambda = t_{ij}\frac{(\nu_a- \nu_{ab})^2}{\rho_a^2+\rho_{ab}^2}\right),\!\\
			&\sigma^2_{w_{i}t_{ij}}\! \sim\! \frac{\rho_a^2}{t_{ij}-1} \chi^2_{t_{ij}-1},\ \sigma^2_{b_{ij}t_{ij}} \!\sim\! \frac{\rho_{ab}^2}{t_{ij}-1} \chi^2_{t_{ij}-1}.\!
			\end{aligned}
			\end{equation}
		\end{itemize}		
	\end{lemma}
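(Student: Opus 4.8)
The plan is to reduce every assertion to the classical sampling theory of i.i.d.\ Gaussian samples and then invoke the standard facts about the joint law of a sample mean and a sample variance; the work is almost entirely in setting up the right independence structure.

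First I would invoke Lemma~\ref{lsubsets} of Appendix~\ref{appendix2}. By construction, the truncated sets $W^t_i$ and $B^t_{ij}$ each contain $t_{ij}$ angles, each data point contributes at most two of the chosen angles, and these are arranged so that (i) the $t_{ij}$ entries of $W^t_i$ are mutually independent, (ii) the $t_{ij}$ entries of $B^t_{ij}$ are mutually independent, and (iii) the family $W^t_i$ is independent of the family $B^t_{ij}$; under Assumption~\ref{amain1} this is exactly what the pairwise-independence of angles sharing a common point, together with full independence of angles sharing no point, permits. Since $I_i\subseteq\mathcal{U}_a$, Assumption~\ref{amain1} makes every angle in $W^t_i$ distributed as $\mathcal{N}(\nu_a,\rho_a^2)$ in both part (a) and part (b). In part (a), $I_j\subseteq\mathcal{U}_a$ as well, so every angle in $B^t_{ij}$ is also $\mathcal{N}(\nu_a,\rho_a^2)$; in part (b), $I_j\subseteq\mathcal{U}_b$, so every angle in $B^t_{ij}$ is $\mathcal{N}(\nu_{ab},\rho_{ab}^2)$.

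Next, for any block of $t$ i.i.d.\ $\mathcal{N}(\mu,\sigma^2)$ variables I would apply the classical Fisher/Cochran result: the sample mean is $\mathcal{N}(\mu,\sigma^2/t)$, the sample variance $S^2$ obeys $(t-1)S^2/\sigma^2\sim\chi^2_{t-1}$, and the sample mean is independent of the sample variance. Applied to $W^t_i$ and to $B^t_{ij}$ separately, this yields the independence of $\mu_{w_{i}t_{ij}}$ from $\sigma^2_{w_{i}t_{ij}}$ and of $\mu_{b_{ij}t_{ij}}$ from $\sigma^2_{b_{ij}t_{ij}}$, together with the scaled chi-squared laws $\sigma^2_{w_{i}t_{ij}}\sim\frac{\rho_a^2}{t_{ij}-1}\chi^2_{t_{ij}-1}$ in both parts, and $\sigma^2_{b_{ij}t_{ij}}\sim\frac{\rho_a^2}{t_{ij}-1}\chi^2_{t_{ij}-1}$ in part (a), $\sigma^2_{b_{ij}t_{ij}}\sim\frac{\rho_{ab}^2}{t_{ij}-1}\chi^2_{t_{ij}-1}$ in part (b), matching \eqref{edists} and \eqref{edistd}. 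For the statistic $X_{ij}=\mu_{w_{i}t_{ij}}-\mu_{b_{ij}t_{ij}}$ I use independence~(iii): the two sample means are independent Gaussians, so $X_{ij}$ is Gaussian with mean $\mathbb{E}X_{ij}$ and variance equal to the sum of the two variances. In part (a) this gives $X_{ij}\sim\mathcal{N}\!\left(0,\tfrac{2\rho_a^2}{t_{ij}}\right)$, so standardising and squaring gives $\frac{t_{ij}}{2\rho_a^2}X_{ij}^2\sim\chi^2_1$. In part (b) it gives $X_{ij}\sim\mathcal{N}\!\left(\nu_a-\nu_{ab},\tfrac{\rho_a^2+\rho_{ab}^2}{t_{ij}}\right)$, so $\sqrt{\tfrac{t_{ij}}{\rho_a^2+\rho_{ab}^2}}\,X_{ij}$ is a unit-variance Gaussian with mean $\delta=\sqrt{\tfrac{t_{ij}}{\rho_a^2+\rho_{ab}^2}}(\nu_a-\nu_{ab})$; since the square of an $\mathcal{N}(\delta,1)$ variable is a noncentral $\chi^2$ with one degree of freedom and noncentrality $\delta^2$, I obtain $\frac{t_{ij}}{\rho_a^2+\rho_{ab}^2}X_{ij}^2\sim\chi^2\!\left(1,\tfrac{t_{ij}(\nu_a-\nu_{ab})^2}{\rho_a^2+\rho_{ab}^2}\right)$, as claimed.

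The only genuine obstacle is the independence bookkeeping in the first step: once the underlying angle samples are known to be genuinely i.i.d.\ Gaussian within each set and the within-set samples are known not to interact with the between-set samples, all the distributional statements are textbook consequences of Gaussian sampling theory. I would therefore devote most of the write-up to making the appeal to Lemma~\ref{lsubsets} precise—in particular to checking that keeping at most two of a point's angles, split between the within- and between-estimates, is enough for pairwise independence to deliver the required joint independence—and treat the chi-squared and noncentral chi-squared identities as standard.
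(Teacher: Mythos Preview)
Your proposal is correct and follows essentially the same route as the paper: invoke Lemma~\ref{lsubsets} to secure independent Gaussian samples in $W^t_i$ and $B^t_{ij}$, apply the standard Gaussian sampling result (the paper cites Theorem~5.3.1 of Casella--Berger where you say Fisher/Cochran) to obtain the sample-mean and sample-variance laws and their independence, then take the difference of the two independent Gaussian sample means and square to get the central or noncentral $\chi^2_1$. Your write-up is slightly more explicit about the noncentral case, but the argument is the same.
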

	\begin{proof}
		Please refer to Appendix \ref{appendix2}.
	\end{proof}}

	{Now, we will look at $d_{ij}$, given in (\ref{estatalter}) for both cases where the points come from the same subspace and different subspaces. 
	\begin{theorem}\label{t1}
		Suppose $I_i$ and $I_j$  contain points from the same subspace $\mathcal{U}_a$ with $t_{ij}$ independent angle samples used for estimating the sample means and variances.  Then under Assumption \ref{amain1},
		\begin{equation}\label{esamess}
			d_{ij} \leq \frac{1}{\sqrt{t_{ij}-1}} \qquad w.p \geq 1-\epsilon_{t_{ij}}. 
		\end{equation}
        Here, $ \epsilon_{t_{ij}} =  2 - F_{\beta'\left(\frac{1}{2},t_{ij}-1\right)}\!\!\left(\frac{t_{ij}}{(t_{ij}-1)^{\frac{3}{2}}}\right) - F_{\beta'\left(\frac{t_{ij}-1}{2},\frac{t_{ij}-1}{2}\right)}\!\!\left(\frac{c + \sqrt{c^2-4}}{2}\right) +  F_{\beta'\left(\frac{t_{ij}-1}{2},\frac{t_{ij}-1}{2}\right)}\!\!\left(\frac{c - \sqrt{c^2-4}}{2}\right)$, where $c = 4\Big(e^{\frac{2}{\sqrt{t_{ij}-1}}}  - \frac{1}{2}\Big)$.
	\end{theorem}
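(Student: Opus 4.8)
\section*{Proof proposal for Theorem~\ref{t1}}

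The plan is to exploit the additive decomposition $d_{ij}=\tfrac14 U_{ij}+\tfrac14\log_e\!\big[\tfrac{V_{ij}}{4}+\tfrac12\big]$ recorded just before the statement, together with the exact sampling distributions supplied by Lemma~\ref{ldistr}(a). First I would note that both summands are non-negative: $U_{ij}=X_{ij}^2/Y_{ij}\ge 0$ trivially, and since $V_{ij}=Z_{ij}+1/Z_{ij}\ge 2$ for every $Z_{ij}>0$, one has $\tfrac{V_{ij}}{4}+\tfrac12\ge 1$ and hence $\log_e[\tfrac{V_{ij}}{4}+\tfrac12]\ge 0$. So it is enough to bound each summand by $\tfrac{1}{2\sqrt{t_{ij}-1}}$. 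Accordingly I would introduce the two events
\[
\mathcal{E}_1=\big\{U_{ij}\le \tfrac{2}{\sqrt{t_{ij}-1}}\big\},\qquad \mathcal{E}_2=\big\{V_{ij}\le c\big\},\qquad c=4\big(e^{2/\sqrt{t_{ij}-1}}-\tfrac12\big),
\]
and observe that on $\mathcal{E}_1\cap\mathcal{E}_2$ we get $\tfrac14 U_{ij}\le\tfrac{1}{2\sqrt{t_{ij}-1}}$ and $\tfrac14\log_e[\tfrac{V_{ij}}{4}+\tfrac12]\le \tfrac14\log_e\!\big(e^{2/\sqrt{t_{ij}-1}}\big)=\tfrac{1}{2\sqrt{t_{ij}-1}}$, so that $d_{ij}\le\tfrac{1}{\sqrt{t_{ij}-1}}$. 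A union bound then gives $\mathbb{P}\big(d_{ij}>\tfrac{1}{\sqrt{t_{ij}-1}}\big)\le \mathbb{P}(\mathcal{E}_1^c)+\mathbb{P}(\mathcal{E}_2^c)$, so the whole proof reduces to evaluating these two tail probabilities in closed form.

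For $\mathbb{P}(\mathcal{E}_1^c)$ I would invoke Lemma~\ref{ldistr}(a): $X_{ij}^2=\tfrac{2\rho_a^2}{t_{ij}}\chi^2_1$, while $Y_{ij}=\sigma^2_{w_{i}t_{ij}}+\sigma^2_{b_{ij}t_{ij}}$ is a sum of two independent $\tfrac{\rho_a^2}{t_{ij}-1}\chi^2_{t_{ij}-1}$ variables and hence is distributed as $\tfrac{\rho_a^2}{t_{ij}-1}\chi^2_{2(t_{ij}-1)}$; moreover $X_{ij}$ is independent of $Y_{ij}$ because the mean and variance estimates are independent (Lemma~\ref{ldistr}(a)) and the subsets $W^{t_i}_i$, $B^{t_j}_{ij}$ of Lemma~\ref{lsubsets} are built from collectively independent angles. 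Therefore $U_{ij}=\tfrac{2(t_{ij}-1)}{t_{ij}}\cdot\tfrac{\chi^2_1}{\chi^2_{2(t_{ij}-1)}}$, and the ratio of independent $\chi^2_1$ and $\chi^2_{2(t_{ij}-1)}$ variables is $\beta'\!\big(\tfrac12,\,t_{ij}-1\big)$, so $\mathcal{E}_1^c$ is the event $\big\{\beta'(\tfrac12,t_{ij}-1)>\tfrac{t_{ij}}{(t_{ij}-1)^{3/2}}\big\}$ and $\mathbb{P}(\mathcal{E}_1^c)=1-F_{\beta'(1/2,\,t_{ij}-1)}\!\big(\tfrac{t_{ij}}{(t_{ij}-1)^{3/2}}\big)$. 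For $\mathbb{P}(\mathcal{E}_2^c)$ I would note that $Z_{ij}=\sigma^2_{w_{i}t_{ij}}/\sigma^2_{b_{ij}t_{ij}}$ is the ratio of two independent $\tfrac{\rho_a^2}{t_{ij}-1}\chi^2_{t_{ij}-1}$ variables; the common scale cancels, giving $Z_{ij}\sim\beta'\!\big(\tfrac{t_{ij}-1}{2},\tfrac{t_{ij}-1}{2}\big)$. Since $c=4e^{2/\sqrt{t_{ij}-1}}-2\ge 2$, the quadratic $z^2-cz+1$ has the two positive real roots $\tfrac{c\pm\sqrt{c^2-4}}{2}$, and $V_{ij}=Z_{ij}+Z_{ij}^{-1}>c$ exactly when $Z_{ij}$ lies outside $\big[\tfrac{c-\sqrt{c^2-4}}{2},\tfrac{c+\sqrt{c^2-4}}{2}\big]$, whence $\mathbb{P}(\mathcal{E}_2^c)=F_{\beta'((t_{ij}-1)/2,(t_{ij}-1)/2)}\!\big(\tfrac{c-\sqrt{c^2-4}}{2}\big)+1-F_{\beta'((t_{ij}-1)/2,(t_{ij}-1)/2)}\!\big(\tfrac{c+\sqrt{c^2-4}}{2}\big)$. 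Summing $\mathbb{P}(\mathcal{E}_1^c)+\mathbb{P}(\mathcal{E}_2^c)$ reproduces exactly the stated $\epsilon_{t_{ij}}$, which completes the argument.

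The step I expect to require the most care is the distributional bookkeeping in the second paragraph: correctly translating the chi-squared ratios into beta-prime laws with the right parameters, and — more importantly — rigorously justifying the independence facts $X_{ij}\perp Y_{ij}$ and $\sigma^2_{w_{i}t_{ij}}\perp\sigma^2_{b_{ij}t_{ij}}$, since Assumption~\ref{amain1} only grants pairwise independence of angles sharing a common point; this is precisely where the specially constructed independent subsets of Lemma~\ref{lsubsets} do the work, and the argument collapses without them. A smaller but essential point to state explicitly is the inequality $c\ge 2$, which is what makes both roots of $z^2-cz+1$ real and positive and so legitimizes rewriting $\{V_{ij}>c\}$ as a two-sided tail event for $Z_{ij}$. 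The remaining ingredients — splitting the budget $\tfrac{1}{\sqrt{t_{ij}-1}}$ into two equal halves and applying the union bound — are routine.
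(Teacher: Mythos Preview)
Your proposal is correct and follows essentially the same route as the paper's proof: split $d_{ij}$ into the $U_{ij}$ and $\log_e(V_{ij}/4+1/2)$ pieces, bound each by $2/\sqrt{t_{ij}-1}$ via the beta-prime laws for $\tfrac{t_{ij}}{2(t_{ij}-1)}U_{ij}$ and for $Z_{ij}$ (obtained from the $\chi^2$ ratios of Lemma~\ref{ldistr}(a)), handle $\{V_{ij}\le c\}$ through the quadratic in $Z_{ij}$ after checking $c\ge 2$, and combine with the union bound $\mathbb{P}(A\cap B)\ge \mathbb{P}(A)+\mathbb{P}(B)-1$. Your explicit remarks on non-negativity of the two summands and on why $X_{ij}\perp Y_{ij}$ and $\sigma^2_{w_it_{ij}}\perp\sigma^2_{b_{ij}t_{ij}}$ hold (via Lemma~\ref{lsubsets}) are welcome additions that the paper leaves implicit.
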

	\begin{proof}
		First, look at $Y_{ij}$, we use the result that if $A_1 \sim \chi^2_{a_1}$, $A_2 \sim \chi^2_{a_2}$, then $A_1 + A_2 \sim \chi^2_{a_1+a_2}$. Using this and (\ref{edists}) in Lemma \ref{ldistr}, $\sigma^2_{w_{i}t_{ij}}+\sigma^2_{b_{ij}t_{ij}} \sim  \frac{\rho_a^2}{t_{ij}-1} \chi^2_{2(t_{ij}-1)}	\Rightarrow \frac{t_{ij}-1}{\rho_a^2}\, Y_{ij} \sim  \chi^2_{2(t_{ij}-1)}$. Again from (\ref{edists}) in Lemma \ref{ldistr}, 
		\begin{align*}
		U_{ij}   = \frac{X_{ij} ^2}{Y_{ij}} = \frac{2(t_{ij}-1)}{t_{ij}} \frac{A_1}{A_2},
		\end{align*}
		where $A_1 \sim \chi^2_{1} $ and $A_2 \sim \chi^2_{2(t_{ij}-1)}$. Using the result that the ratio of two independent chi-squared random variables follows a beta prime distribution \cite{johnson1995continuous}, $\frac{A_1}{A_2} \sim \beta'\left(\frac{1}{2},t_{ij}-1\right)$. Hence,
		$\frac{t_{ij}}{2(t_{ij}-1)}\, U_{ij} \sim \beta'\left(\frac{1}{2},t_{ij}-1\right)$. This leads to the following: 
		\begin{equation}\label{et1bnd}
		U_{ij} \leq \frac{2}{\sqrt{t_{ij}-1}} \hskip10pt w.p	\hskip10pt  F_{\beta'\left(\frac{1}{2},t_{ij}-1\right)}\!\!\left(\frac{t_{ij}}{(t_{ij}-1)^{\frac{3}{2}}}\right)\!.
		\end{equation}
		Let $c = 4\left(e^{\frac{2}{\sqrt{t_{ij}-1}}}  - \frac{1}{2}\right)$. Now, we look at:$$\mathbb{P}\left(\log_{e}\left(\frac{V_{ij}}{4}+\frac{1}{2}\right) \leq \frac{2}{\sqrt{t_{ij}-1}}\right) \equiv \mathbb{P}\left(V_{ij} \leq c\right).$$ $$\mathbb{P}(V_{ij} \leq c) = \mathbb{P}\left(\!Z_{ij}+\dfrac{1}{Z_{ij}} \leq c\!\right) =   \mathbb{P}\left(Z_{ij}^2 -cZ_{ij} +1 \leq 0\right).$$
Consider $Z_{ij}^2 -cZ_{ij} +1\leq 0.$ The roots of this quadratic are  $(z_0,z_0')=\frac{c \pm \sqrt{c^2-4}}{2}$. Note that $e^{\frac{2}{\sqrt{t_{ij}-1}}}\geq 1 
 \Rightarrow c^2 \geq 4.$ Thus, $z_0$ and $z_0'$ are real with $z_0'\leq z_0$. Hence, $Z_{ij}^2 -cZ_{ij} +1\leq 0\Rightarrow (Z_{ij}-z_0')(Z_{ij}-z_0)\leq0\Rightarrow Z_{ij}\in[z_0',z_0]$. Thus,
		\begin{align*}
		&\mathbb{P}(Z_{ij}^2\!-\!cZ_{ij}\!+1\!\leq\!0)=\mathbb{P}\bigg(\frac{c\!-\!\sqrt{c^2\!-\!4}}{2}\!\leq Z_{ij} \leq\! \frac{c\!+\! \sqrt{c^2\!-\!4}}{2}\bigg).
		\end{align*}
		Note that $Z_{ij} = {\sigma^2_{w_{i}t_{ij}}}/{\sigma^2_{b_{ij}t_{ij}}} = {A_1}/{A_2}$, where $A_1 \sim \chi^2_{t_{ij}-1}$ and $A_2 \sim \chi^2_{t_{ij}-1}$. Since $Z_{ij}$ is the ratio of two independent chi-squared random variables, $Z_{ij} \sim \beta'\left(\frac{t_{ij}-1}{2},\frac{t_{ij}-1}{2}\right)$. Thus,\\ $w.p\ \ F_{\beta'\left(\frac{t_{ij}-1}{2},\frac{t_{ij}-1}{2}\right)}\!\!\left(\frac{c\!+\! \sqrt{c^2\!-\!4}}{2}\right)- F_{\beta'\left(\frac{t_{ij}-1}{2},\frac{t_{ij}-1}{2}\right)}\!\!\left(\frac{c\!- \!\sqrt{c^2\!-\!4}}{2}\right)\!,$
		\begin{equation}\label{ebndsecond}
\log_{e}\left(\frac{V_{ij}}{4}+\frac{1}{2}\right) \leq \frac{2}{\sqrt{t_{ij}-1}}.
		\end{equation}
		Note that $d_{ij} = \frac{1}{4}\!\left(U_{ij}+ \log_{e}\left[\frac{V_{ij}}{4}+\frac{1}{2}\right]\right).$ We know that $\mathbb{P}(A\cap B) \geq \mathbb{P}(A)+ \mathbb{P}(B) - 1$. So, combining (\ref{et1bnd}) and (\ref{ebndsecond}),  $w.p\ \geq 1 - \epsilon_{t_{ij}}$
		with $\epsilon_{t_{ij}}$ as defined in the statement,
		\begin{align*}
		d_{ij} \leq \frac{1}{4}\Big(\frac{2}{\sqrt{t_{ij}-1}} + \frac{2}{\sqrt{t_{ij}-1}}\Big) =  \frac{1}{\sqrt{t_{ij}-1}}.\tag*{\qedhere}
		\end{align*}
	\end{proof}}
	{Table \ref{table1} gives a numerical perspective of the bound and its probability. As seen, 
	the lower bound on probability 
	increases with $t_{ij}$
	.}
	\begin{table}[h!]
	\caption{Probabilities  and Bounds in Theorem \ref{t1}\vspace{-0.125cm}}
	\label{table1}
	\begin{center}
		\begin{tabular}{?c?c|c|c|c?} 
			\specialrule{1pt}{0pt}{0pt}
			$t_{ij}$                &   $11$  &   $51$  &  $101$  &  $151$  \\
			\specialrule{1pt}{0pt}{0pt}
			$\frac{1}{\sqrt{t_{ij}-1}}$             & 0.3162  & 0.1414  &   0.1   & 0.0816  \\ 
			\specialrule{1pt}{0pt}{0pt}
			$1-\epsilon_{t_{ij}}$& 0.970174 & 0.999567 & 0.999980	& 0.999998\\
			\specialrule{1pt}{0pt}{0pt}
		\end{tabular}
	\end{center}
\end{table}
	\par 	{Through the next theorem, we will derive a lower bound for $d_{ij}$, when $I_{i}$ and $I_{j}$ contain points from different subspaces.} 
	{
		\begin{theorem}\label{t2}
		Suppose, $I_i$ contain points only from subspace $\mathcal{U}_a$ and $I_j$ from $\mathcal{U}_b$ with $t_{ij}$ independent angle samples used for estimating the sample mean and variances. Let $M_{ab} = \dfrac{|\nu_a - \nu_{ab}|}{\sqrt{\rho_a^2+\rho_{ab}^2}}$,  $R_{ab} = \dfrac{\rho_a^2}{\rho_{ab}^2} + \dfrac{\rho_{ab}^2}{\rho_a^2}$ and $\alpha_{t_{ij}} = e^{\frac{4}{\sqrt{t_{ij}-1}}}$. 
		Then, 
		\begin{equation}\label{ediffss}
		d_{ij} \geq \frac{1}{\sqrt{t_{ij}-1}}\qquad w.p 
		\ \geq 1- \delta_{t_{ij}}^{ab},
		\end{equation}
		where
		\begin{align*}
			&\delta_{t_{ij}}^{ab} \! =\! 1\!-\! \Big[\! F_{\chi^2_{t_{ij}-1}}\!\big((t_{ij}\!-\!1)\alpha_{t_{ij}}\big)\! -\! F_{\chi^2_{t_{ij}-1}}\!\big((t_{ij}\!-\!1)(2\!-\!\alpha_{t_{ij}})\big)\!\Big]^2\\&\times\Big[1 -F_{\chi^2(1,t_{ij}M_{ab}^2)}\big(t_{ij}\log_{e}(1+M_{ab}) \alpha_{t_{ij}}\big)\Big], \mbox{ when }
		\end{align*}
		\begin{equation}\label{etmin}
		t_{ij} \geq 1+ \frac{16}{(\log_e\psi_{ab})^2},
		\end{equation}
		where,\\
		$\psi_{ab} = \frac{\sqrt{(R_{ab}-2)^2(1+M_{ab})^2+32R_{ab}(1+M_{ab})} -(R_{ab}-2)(1+M_{ab})}{8}$.  
	\end{theorem}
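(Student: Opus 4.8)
The plan is to isolate a high‑probability ``good event'' $G$ on which the bound $d_{ij}\ge 1/\sqrt{t_{ij}-1}$ holds deterministically, and then to evaluate $\mathbb{P}(G)$ using the distributions and the independence coming from Lemma \ref{ldistr}(b).

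Recall that $d_{ij}=\tfrac14\big(U_{ij}+\log_e[\tfrac{V_{ij}}{4}+\tfrac12]\big)$ with $U_{ij}=X_{ij}^2/Y_{ij}\ge 0$ and $V_{ij}=Z_{ij}+1/Z_{ij}\ge 2$, so the logarithmic term is always nonnegative and the task is to show that both summands are suitably large. By Lemma \ref{ldistr}(b) I would write $X_{ij}^2=\tfrac{\rho_a^2+\rho_{ab}^2}{t_{ij}}\,N$ with $N\sim\chi^2\!\left(1,t_{ij}M_{ab}^2\right)$, and $\sigma^2_{w_{i}t_{ij}}=\tfrac{\rho_a^2}{t_{ij}-1}A$, $\sigma^2_{b_{ij}t_{ij}}=\tfrac{\rho_{ab}^2}{t_{ij}-1}B$ with $A,B\sim\chi^2_{t_{ij}-1}$; since sample mean and sample variance of a Gaussian sample are independent and $W^{t}_i,B^{t}_{ij}$ are built from independent angle samples (Lemma \ref{lsubsets}), the variables $N$, $A$ and $B$ are mutually independent.

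I would then take $G$ to be the intersection of the three events $A\in\big[(t_{ij}-1)(2-\alpha_{t_{ij}}),(t_{ij}-1)\alpha_{t_{ij}}\big]$, $B\in\big[(t_{ij}-1)(2-\alpha_{t_{ij}}),(t_{ij}-1)\alpha_{t_{ij}}\big]$ and $N\ge t_{ij}\log_e(1+M_{ab})\,\alpha_{t_{ij}}$. On $G$ one gets, first, $Y_{ij}=\tfrac{\rho_a^2 A+\rho_{ab}^2 B}{t_{ij}-1}\le(\rho_a^2+\rho_{ab}^2)\alpha_{t_{ij}}$, hence $U_{ij}=X_{ij}^2/Y_{ij}\ge\log_e(1+M_{ab})$ from the lower bound on $N$; second, the two‑sided control of $A$ and $B$ confines $Z_{ij}=\sigma^2_{w_{i}t_{ij}}/\sigma^2_{b_{ij}t_{ij}}$ to an interval whose distance from $1$ is governed by the ratio $\rho_a^2/\rho_{ab}^2$ and the window $[2-\alpha_{t_{ij}},\alpha_{t_{ij}}]$, and since $z\mapsto z+1/z$ is convex and symmetric about $z=1$, its minimum over that interval (taken at the endpoint nearest $1$) gives an explicit lower bound on $V_{ij}$, hence on $\log_e[\tfrac{V_{ij}}{4}+\tfrac12]$, in terms of $R_{ab}$ and $\alpha_{t_{ij}}$. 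Adding the two contributions and using hypothesis (\ref{etmin}) --- which for $\psi_{ab}>1$ is exactly equivalent to $\alpha_{t_{ij}}=e^{4/\sqrt{t_{ij}-1}}\le\psi_{ab}$ --- yields $U_{ij}+\log_e[\tfrac{V_{ij}}{4}+\tfrac12]\ge 4/\sqrt{t_{ij}-1}$, i.e. $d_{ij}\ge 1/\sqrt{t_{ij}-1}$; here $\psi_{ab}$ is precisely the positive root of the quadratic (with coefficients built from $R_{ab}$ and $M_{ab}$) onto which this last inequality collapses in the worst (smallest $V_{ij}$) configuration. Finally, by the mutual independence of $N,A,B$, $\mathbb{P}(G)$ factors as $\big[F_{\chi^2_{t_{ij}-1}}((t_{ij}-1)\alpha_{t_{ij}})-F_{\chi^2_{t_{ij}-1}}((t_{ij}-1)(2-\alpha_{t_{ij}}))\big]^2\big[1-F_{\chi^2(1,t_{ij}M_{ab}^2)}(t_{ij}\log_e(1+M_{ab})\alpha_{t_{ij}})\big]=1-\delta_{t_{ij}}^{ab}$, and since $G$ implies the desired inequality, the claim follows.

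The step I expect to be the main obstacle is the deterministic lower bound on the $\log_e[\tfrac{V_{ij}}{4}+\tfrac12]$ term. Because $Z_{ij}$ ranges over an interval whose position relative to $1$ depends on both $\rho_a^2/\rho_{ab}^2$ and $\alpha_{t_{ij}}$, one must split into the case where this interval straddles $1$ (so only $V_{ij}\ge 2$ is available and the whole margin must come from $U_{ij}$, forcing $\alpha_{t_{ij}}\le 1+M_{ab}$) and the case where it does not (so $V_{ij}$ is bounded away from $2$ by an expression in $R_{ab}$), and in each case reduce the resulting inequality to the single quadratic whose relevant root is $\psi_{ab}$. One also has to check that $\alpha_{t_{ij}}<2$, so that the window $[2-\alpha_{t_{ij}},\alpha_{t_{ij}}]$ and the associated $\chi^2_{t_{ij}-1}$ interval are well defined --- this is where the precise constant $16$ in (\ref{etmin}) is needed. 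The remaining manipulations (bounding $Y_{ij}$, passing between $N,A,B$ and the estimates, and rewriting the probabilities via $\chi^2$ cdfs) are routine.
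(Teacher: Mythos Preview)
Your plan is exactly the paper's: isolate the same good event $G$ (the intersection of the three events on $N$, $A$, $B$), factor $\mathbb{P}(G)=1-\delta_{t_{ij}}^{ab}$ by independence, and on $G$ bound $U_{ij}$ and $V_{ij}$ deterministically so that the condition $4d_{ij}\ge 4/\sqrt{t_{ij}-1}=\log_e\alpha_{t_{ij}}$ reduces to a quadratic in $\alpha_{t_{ij}}$ with positive root $\psi_{ab}$.

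The one place you diverge is the $V_{ij}$ bound, and there you are making it harder than necessary. The paper does \emph{not} minimise $z+1/z$ over the range of $Z_{ij}$, and no case split on whether that range straddles $1$ is needed. On $G$ one has simultaneously $\sigma^2_{w_it_{ij}}\ge\rho_a^2(2-\alpha_{t_{ij}})$, $\sigma^2_{b_{ij}t_{ij}}\le\rho_{ab}^2\alpha_{t_{ij}}$ and the two symmetric inequalities, so one bounds $Z_{ij}$ and $1/Z_{ij}$ \emph{separately} from below and simply adds:
\[
V_{ij}=Z_{ij}+\frac{1}{Z_{ij}}\;\ge\;\frac{\rho_a^2(2-\alpha_{t_{ij}})}{\rho_{ab}^2\,\alpha_{t_{ij}}}+\frac{\rho_{ab}^2(2-\alpha_{t_{ij}})}{\rho_a^2\,\alpha_{t_{ij}}}\;=\;\frac{2-\alpha_{t_{ij}}}{\alpha_{t_{ij}}}\,R_{ab}.
\]
Combining this with $U_{ij}\ge\log_e(1+M_{ab})$, exponentiating and clearing $\alpha_{t_{ij}}$ gives directly the single quadratic $4\alpha_{t_{ij}}^2+(R_{ab}-2)(1+M_{ab})\alpha_{t_{ij}}-2R_{ab}(1+M_{ab})\le 0$, whose positive root is precisely $\psi_{ab}$; hypothesis~(\ref{etmin}) is nothing but $\alpha_{t_{ij}}\le\psi_{ab}$. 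Your convexity argument would give a sharper lower bound on $V_{ij}$, but it is not needed and does not collapse as cleanly to $\psi_{ab}$. One minor correction: condition~(\ref{etmin}) does \emph{not} in general force $\alpha_{t_{ij}}<2$ (for instance $R_{ab}=2$, $M_{ab}=8$ gives $\psi_{ab}=3$), so that cannot be the source of the constant $16$; the $16=4^2$ comes purely from solving $e^{4/\sqrt{t_{ij}-1}}\le\psi_{ab}$ for $t_{ij}$.
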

	\begin{proof}
	Please refer to Appendix \ref{appendix2}
	\end{proof}
	Note that the bound in (\ref{etmin}) is a very conservative sufficient condition. A numerical perspective can be seen in Table \ref{Tablethm2}, where $t_{min} = \left\lceil 1+ \frac{16}{(\log_e\psi_{ab})^2}\right\rceil$. $R_{ab}$  and $M_{ab}$ give a sense of how well the distributions of angles are separated. As expected, when these quantities are large, $t_{min}$ reduces, i.e. with a lesser number of angle samples, we get a larger $d_{ij}$ value. One could also note that, for $0<M_{ab} <e-1$, the probabilities are not very high at $t_{ij} = t_{min}$, in which case the high probability condition of $M_{ab} = 0$ is applicable since $U_{ij} \geq 0$ in any case.}
	\par 	{Suppose we have a clustering $\mathcal{C}_K$, such that each constituent cluster in $\mathcal{C}_K$ only contains points from the same subspace. Let $S^{\mathcal{O}}_K =\{(i,j)\ |\  \forall p \in I_i \text{ and } q \in I_j, \mathbf{x}_p \in \mathcal{U}_a \text{ and } \mathbf{x}_q \in \mathcal{U}_b, \ a\neq b, \ i,j \in 1,2,\ldots,K \}$ denote the set of all clustering index pairs such that the points in them belong to different subspaces and let $S^{\mathcal{I}}_K =\{(i,j) \text{	}| \text{	} \forall p \in I_i \text{ and } q \in I_j,\ \mathbf{x}_p,\ \mathbf{x}_q \in \mathcal{U}_a \text{	for some }a,\text{	} i,j \in 1,2,\ldots,K \}$ denote clustering index pairs such that the points in them belong to one subspace. Let $d_{\mathcal{O}} = \underset{(i,j) \in S^{\mathcal{O}}_{K}}{\min}\text{	}d_{ij}$  be the minimum inter-subspace distance and suppose $t_\mathcal{O}$ angle samples were used for its computation. Let the corresponding closest subspaces be $\mathcal{U}_a$ and $\mathcal{U}_b$. Let $T^{\mathcal{I}}_K = \{t_{ij}  \ | \ (i,j) \in S^{\mathcal{I}}_K\}$ denote the set of the number of independent angle samples used for computation of the distance between cluster pairs which contain points from only one subspace. Under the above notations and assumptions, we can state the following corollary, which is a direct consequence of Theorems \ref{t1} and \ref{t2} and defines the threshold $\zeta_{K}$.
		\begin{table}[t!]
			\begin{center}
				\caption{Probabilities and Bound in theorem \ref{t2}: $t_{min} = \left\lceil1+ \frac{16}{(\log_e\psi_{ab})^2}\right\rceil$}
				\label{Tablethm2}
				\setlength{\tabcolsep}{1pt}
				\begin{tabular}{?c?c|c?c|c?c|c?}
					\clineB{2-7}{2}
					\multicolumn{1}{c?}{}&\multicolumn{2}{c?}{$M_{ab}=0$}&\multicolumn{2}{c?}{$M_{ab}=2$}&\multicolumn{2}{c?}{$M_{ab}=3$} \\
					\specialrule{1pt}{0pt}{0pt}
					$R_{ab}$&$t_{min}$ & $1- \delta_{t_{min}}^{ab}$ &$t_{min}$ & $1- \delta_{t_{min}}^{ab}$&$t_{min}$ & $1- \delta_{t_{min}}^{ab}$\\
					\hline
					3&$\ \ \; \,$1575$\ \ \;\,$&0.994042&$\ \  \ \;\,$50$\ \  \ \;\, $&0.998565&$\ \ \; \ \,$35$\ \ \;\ \,$&0.998918\\
					10&118&0.997716&40&0.998573&35&0.998918\\
					$\ \ $20$\ \ $&68&0.998275&38&0.998440&35&0.998918\\
					\specialrule{1pt}{0pt}{0pt}
				\end{tabular}
			\end{center}
		\end{table}}
	{
		\begin{corollary}[To Theorems \ref{t1} and \ref{t2}]\label{c1}
		Suppose the clustering $\mathcal{C}_K$ as described above, with mergeable pair $(I_{i^*}, I_{j^*})$, has a non-empty $S^{\mathcal{I}}_K$. Let $t_K$ be the number of independent samples used for estimates in the mergeable pair. If $t_{\mathcal{O}} \geq 1+ \frac{16}{(\log_e\psi_{ab})^2}$ and $t_{\mathcal{O}} \leq$ at least one element in $T^{\mathcal{I}}_K$, then $w.p \geq 1 - \epsilon_{t_{K}} - \delta_{t_{\mathcal{O}}}^{ab}$, the mergeable pair contains points from the same subspace and $\gamma_K \leq \zeta_K$, when 
		\begin{equation}\label{ethr}
		\zeta_K = \frac{1}{\sqrt{t_K-1}}.
		\end{equation}
	\end{corollary}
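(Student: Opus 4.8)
The plan is to deduce the corollary directly from Theorems \ref{t1} and \ref{t2}, using two structural facts about the clustering $\mathcal{C}_K$ in the statement: every constituent cluster consists of points from a single subspace, so every unordered pair $(i,j)$ of constituent clusters lies in exactly one of $S^{\mathcal{I}}_K$ or $S^{\mathcal{O}}_K$ (hence the two theorems between them control $d_{ij}$ for all pairs); and $\gamma_K = \min_{i \neq j} d_{ij}$, attained at the mergeable pair $(I_{i^*}, I_{j^*})$, with $t_K = t_{i^* j^*}$.

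The first step is to bound the inter-subspace distances from below. Let $\mathcal{U}_a,\mathcal{U}_b$ be the closest pair of subspaces represented in $\mathcal{C}_K$, so that $d_{\mathcal{O}} = \min_{(i,j)\in S^{\mathcal{O}}_K} d_{ij}$ is realised between two clusters drawn from $\mathcal{U}_a$ and $\mathcal{U}_b$ using $t_{\mathcal{O}}$ samples. The hypothesis $t_{\mathcal{O}} \ge 1 + 16/(\log_e\psi_{ab})^2$ is exactly condition (\ref{etmin}), so Theorem \ref{t2} applies to this pair and gives $d_{\mathcal{O}} \ge 1/\sqrt{t_{\mathcal{O}}-1}$ with probability at least $1 - \delta^{ab}_{t_{\mathcal{O}}}$; since $d_{\mathcal{O}}$ is the minimum over $S^{\mathcal{O}}_K$, on this event every inter-subspace distance is at least $1/\sqrt{t_{\mathcal{O}}-1}$.

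The second step is to exhibit a small intra-subspace distance and conclude. Since $t_{\mathcal{O}}$ is at most some element of $T^{\mathcal{I}}_K$, i.e.\ $t_{\mathcal{O}} \le \max T^{\mathcal{I}}_K$, I can fix a pair $(i_0,j_0)\in S^{\mathcal{I}}_K$ with $t_{i_0 j_0} = \max T^{\mathcal{I}}_K \ge t_{\mathcal{O}}$; Theorem \ref{t1} applied to it gives $d_{i_0 j_0} \le 1/\sqrt{t_{i_0 j_0}-1} \le 1/\sqrt{t_{\mathcal{O}}-1}$ with probability at least $1 - \epsilon_{t_{i_0 j_0}}$. On the intersection with the event of the first step, $\gamma_K \le d_{i_0 j_0} \le 1/\sqrt{t_{\mathcal{O}}-1}$ whereas every inter-subspace distance is at least $1/\sqrt{t_{\mathcal{O}}-1}$, so the minimiser of $d_{ij}$ — the mergeable pair — may be taken in $S^{\mathcal{I}}_K$: this is the first claim. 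On this same event $t_K = t_{i^* j^*}$ then belongs to $T^{\mathcal{I}}_K$, hence $t_K \le \max T^{\mathcal{I}}_K = t_{i_0 j_0}$, and therefore $\gamma_K \le d_{i_0 j_0} \le 1/\sqrt{t_{i_0 j_0}-1} \le 1/\sqrt{t_K-1} = \zeta_K$, which is the second claim. A union bound over the events of the two steps yields probability at least $1 - \epsilon_{t_{i_0 j_0}} - \delta^{ab}_{t_{\mathcal{O}}}$; since the failure probability $\epsilon_{(\cdot)}$ decreases in its index (cf.\ Table \ref{table1}) and $t_{i_0 j_0} \ge t_K$ on the favourable event, the stated bound $1 - \epsilon_{t_K} - \delta^{ab}_{t_{\mathcal{O}}}$ also holds.

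The only delicate point is the inference that ``$\gamma_K \le 1/\sqrt{t_{\mathcal{O}}-1}$ together with all inter-subspace distances being at least $1/\sqrt{t_{\mathcal{O}}-1}$'' forces the mergeable pair into $S^{\mathcal{I}}_K$: the bounds from Theorems \ref{t1} and \ref{t2} are non-strict, so in the borderline case of equality the minimum of $d_{ij}$ could a priori be attained by both an intra- and an inter-subspace pair. This is resolved by noting that the auxiliary pair $(i_0,j_0)$ then itself attains the minimum, so a mergeable pair inside $S^{\mathcal{I}}_K$ always exists (equivalently, one may invoke the almost-sure absence of exact ties under the continuous distributions of Assumption \ref{amain1}). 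Everything else is a routine combination of the two preceding theorems, and I do not expect the probability accounting to pose a real difficulty.
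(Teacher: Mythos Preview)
Your argument is correct and follows the same outline as the paper's: use Theorem~\ref{t2} to lower-bound the minimum inter-subspace distance $d_{\mathcal O}$ by $1/\sqrt{t_{\mathcal O}-1}$, use Theorem~\ref{t1} to upper-bound some intra-subspace distance by the same quantity via the hypothesis $t_{\mathcal O}\le$ an element of $T^{\mathcal I}_K$, conclude that the mergeable pair is intra-subspace, and then read off $\gamma_K\le\zeta_K$.

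The one difference is in how the probability term $\epsilon_{t_K}$ is obtained. The paper, once it has argued that the mergeable pair lies in $S^{\mathcal I}_K$, simply applies Theorem~\ref{t1} directly to that pair and reads off $d_{i^*j^*}\le 1/\sqrt{t_K-1}$ with probability $1-\epsilon_{t_K}$. You instead apply Theorem~\ref{t1} to a \emph{fixed} auxiliary pair $(i_0,j_0)$ with $t_{i_0j_0}=\max T^{\mathcal I}_K$, obtain $\epsilon_{t_{i_0j_0}}$, and then use the chain $\gamma_K\le d_{i_0j_0}\le 1/\sqrt{t_{i_0j_0}-1}\le 1/\sqrt{t_K-1}$ together with monotonicity of $\epsilon_{(\cdot)}$ to recover the stated bound. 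Your route is slightly more careful, since the identity of the mergeable pair is itself random and applying Theorem~\ref{t1} to it post hoc is a little informal; the paper's version is terser. Either way the content is the same combination of the two preceding theorems.
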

	\begin{proof}
		As per Theorem \ref{t1}, for any $(i,j) \in S^{\mathcal{I}}_K$, $d_{ij} \leq \frac{1}{\sqrt{t_{ij}-1}}$. From Theorem \ref{t2}, for the closest inter-subspace cluster pair $d_{\mathcal{O}}\geq \frac{1}{\sqrt{t_{\mathcal{O}-1}}}\ w.p\ \geq 1-\delta_{t_{\mathcal{O}}}^{ab}$. Also, we have assumed that $t_{\mathcal{O}} \leq $ at least one element in $T^{\mathcal{I}}_K$, which means that there exist some $t_{ij}$ with $(i,j) \in S^{\mathcal{I}}_K$, such that $d_{\mathcal{O}} \geq \frac{1}{\sqrt{t_{ij}-1}}$. Hence, $d_{\mathcal{O}} \geq d_{ij}$ for some $(i,j) \in S^{\mathcal{I}}_K$. So the mergeable pair $(I_{i^*}, I_{j^*})$ contains points from the same subspace and hence $d_{i^*j^*} \leq \frac{1}{\sqrt{t_K-1}}$ with probability $1 - \epsilon_{t_{K}}$ from Theorem \ref{t1}. Hence, $w.p\ \geq 1 - \epsilon_{t_{K}} - \delta_{t_{\mathcal{O}}}^{ab}$, the statement is true.
	\end{proof}
	From the above, it is clear that, at any stage $K$ of the algorithm, if suitable conditions are satisfied, 
	Algorithm \ref{alg_merge} merges two clusters from the same subspace and hence the merging process keeps merging constituent clusters with only indices of points from the same subspace until it reaches a true clustering. Through Corollary \ref{c1}, we have shown that until then $\gamma_K \leq \zeta_{K}$ with a high probability.} And at this point, since the mergeable pair will contain points from different subspaces, $\gamma_K$ will be high, and we select this crossover instance as the estimated clustering in Algorithm \ref{alg_zeta}. We demonstrate this behaviour of $\gamma_{K}$ in Fig. \ref{fig_gamma_K}. Here, we have considered Model \ref{model_ss} and the data points are drawn from $L=6$ subspaces, each of dimension $7$ in $\mathbb{R}^{100}$. It can be seen that $\gamma_{K}$ is close to $0$ when $K>6$ and when $K=6$, $\gamma_{K}$ is bounded away from $0$ and $\gamma_{K}>\zeta_{K}$. From the above observations, we can state the following remark:
	\begin{center}
		\begin{figure}[h!]
			\includegraphics[width=\linewidth]{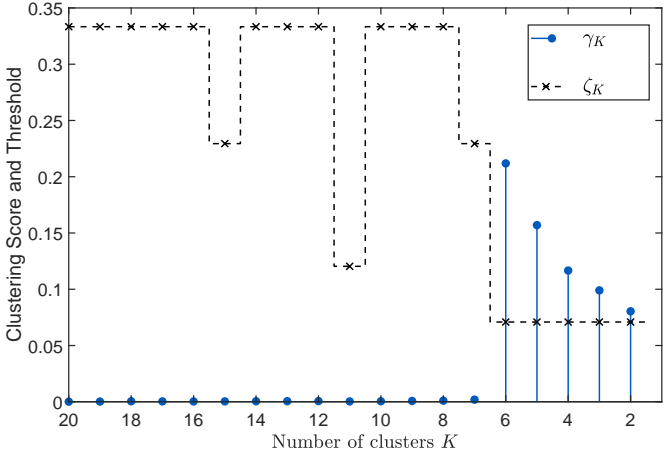}
			\caption{Clustering score $\gamma_{K}$ and threshold $\zeta_{K}$ in Model \ref{model_ss} with $L=6$ subspaces in $\mathbb{R}^{100}$ each is of dimension 7}
			\label{fig_gamma_K}
		\end{figure}
	\end{center}
	\begin{remark}
	Suppose we are given an initial clustering $\mathcal{C}_P$
	such that each constituent cluster in $\mathcal{C}_P$ contains only indices
	of points from the same subspace. {Under suitable conditions as described in Theorems \ref{t1} and \ref{t2} and Corollary \ref{c1}}, an estimate $ \hat{\mathcal{C}}_{\hat{L}}$, which is arrived at by the merging process in Algorithm \ref{alg_merge} and the selection process in Algorithm \ref{alg_zeta}, is a true clustering with a high probability.
	\end{remark}

	{Note that the conditions derived are very conservative sufficient conditions for the algorithm to merge correctly 
	at any stage. For instance, consider Model 1 where $M_{ab} = 0$. 
	It has been observed that even for the cases with smaller $R_{ab}$, the algorithm works perfectly, starting with initial clusters having just 3 or 4 points (i.e., $t_{ij}=2$). With $n=100$ and 
	ranks of the subspaces 
	$r=10$, 
	$R_{ab}
	=\frac{n-2}{r-2}+\frac{r-2}{n-2}
	=12.33$ and Theorem \ref{t2} 
	dictates $t_{min}=96$ 
	to achieve $1-\delta_{t_{min}}^{ab}=0.9979$. But with just 
	$t_{ij}=2$, the algorithm achieves error-free clusterings, 
	as shown in Table \ref{tab_combined1}
	. The error is also minimum in many real 
	datasets 
	as illustrated in Section \ref{sNumerical_validations}
	. Thus, the algorithm works with minimal error in much harsher conditions than those suggested by Theorems  \ref{t1} and \ref{t2} and Corollary \ref{c1}. 
	It is extremely difficult to get tighter bounds and conditions as the distribution of empirical Bhattacharyya distance has not been characterized and to do so is beyond the scope of this work. }
	\subsection{A Note on Complexity}
	The complexity for computing all the $N \choose 2$ angles between $n$-dimensional points is $O(N^2n)$. For $P$ constituent clusters in the initial clustering, one has to compute the statistical distances for all possible combinations. When these are computed as described in the previous sections, one could precompute the sum of angles and the squared sum of angles for all possible combinations, leading to a complexity of $O(P^2)$. In the merging process, using the precomputed values, one can update the distances using simple arithmetic, leading to a complexity of $O(P)$. Hence, the overall complexity is $O(max(N^2n,P^2))$. Runtime comparisons are provided in Section \ref{valid_results}.
	\section{Numerical Results}\label{sNumerical_validations}
	\begin{table*}[h!]
		\begin{center}
			\caption{Performance of subspace clustering algorithms on synthetic datasets}
			\label{tab_combined1}
			\setlength{\tabcolsep}{3pt}
			\begin{tabular}{?cc|c?c|c?c|c?c|c?c|c?c|c?c|c?c|c?c|c?c|c?}
				\clineB{4-21}{2}
				\multicolumn{3}{c?}{}&\multicolumn{2}{c?}{\textbf{Ours}}&\multicolumn{2}{c?}{\textbf{SSC-ADMM}}&\multicolumn{2}{c?}{\textbf{SSC-OMP}}&\multicolumn{2}{c?}{\textbf{EnSC}}&\multicolumn{2}{c?}{\textbf{BDR-Z}}&\multicolumn{2}{c?}{\textbf{TSC}}&\multicolumn{2}{c?}{\textbf{ALC}}&\multicolumn{2}{c?}{\textbf{LRR}}&\multicolumn{2}{c?}{\textbf{$\lambda$-Means}}\\
				\cline{4-21}
				\multicolumn{3}{c?}{} &\multicolumn{1}{c|}{\textbf{CE}}&\multicolumn{1}{c?}{\textbf{NMI}}&\multicolumn{1}{c|}{\textbf{CE}}&\multicolumn{1}{c?}{\textbf{NMI}}&\multicolumn{1}{c|}{\textbf{CE}}&\multicolumn{1}{c?}{\textbf{NMI}}&\multicolumn{1}{c|}{\textbf{CE}}&\multicolumn{1}{c?}{\textbf{NMI}}&\multicolumn{1}{c|}{\textbf{CE}}&\multicolumn{1}{c?}{\textbf{NMI}}&\multicolumn{1}{c|}{\textbf{CE}}&\multicolumn{1}{c?}{\textbf{NMI}}&\multicolumn{1}{c|}{\textbf{CE}}&\multicolumn{1}{c?}{\textbf{NMI}}&\multicolumn{1}{c|}{\textbf{CE}}&\multicolumn{1}{c?}{\textbf{NMI}}&\multicolumn{1}{c|}{\textbf{CE}}&\multicolumn{1}{c?}{\textbf{NMI}}\\
				\clineB{4-21}{2}
				\specialrule{1pt}{2pt}{0pt}
				\multicolumn{2}{?c|}{\multirow{3}{*}{ \spheading{\centering \textbf{Subspace-Normal}}}}&$L=4$ &\textbf{0}&\textbf{1}&\textbf{0}&\textbf{1}&0.002&0.997&0.001&0.998&0.022&0.985&\textbf{0}&\textbf{1}&\textbf{0}&\textbf{1}&0.066&0.973&NA&NA\\
				{}&{} &$L=7$ &\textbf{0}&\textbf{1}&0.001&0.999&0.101&0.949&0.062&0.969&0.001&0.999&\textbf{0}&\textbf{1}&\textbf{0}&\textbf{1}&0.074&0.969&NA&NA\\
				{}&{} &$L=10$ &\textbf{0}&\textbf{1}&0.02&0.989&0.11&0.954&0.126&0.95&0.024&0.989&0.02&0.99&\textbf{0}&\textbf{1}&0.12&0.961&NA&NA\\
				
				\clineB{1-21}{2}
				\specialrule{1pt}{2pt}{0pt}	
				\multicolumn{2}{?c|}{\multirow{3}{*}{\spheading{\centering \textbf{Subspace-Uniform}}}}&$L=4$ &\textbf{0}&\textbf{1}&0.023&0.984&0.001&0.998&0.002&0.997&0.025&0.988&\textbf{0}&\textbf{1}&\textbf{0}&\textbf{1}&0.034&0.967&NA&NA\\
				{}&{} &$L=7$ &\textbf{0}&\textbf{1}&0.024&0.987&0.023&0.987&0.024&0.987&0.024&0.987&\textbf{0}&\textbf{1}&\textbf{0}&\textbf{1}&0.04&0.977&NA&NA\\
				{}&{} &$L=10$ &\textbf{0}&\textbf{1}&0.04&0.979&0.02&0.989&0.021&0.989&0.04&0.979&0.02&0.989&\textbf{0}&\textbf{1}&0.005&0.998&NA&NA\\
				\clineB{1-21}{2}
				\specialrule{1pt}{2pt}{0pt}	
				\multicolumn{2}{?c|}{\multirow{3}{*}{\spheading{\centering \textbf{Subspace-Dependent}}}}&$L=12$ &\textbf{0}&\textbf{1}&0.078&0.969&0.089&0.955&0.077&0.969&0.124&0.952&0.078&0.97&0.917&0&0.319&0.901&NA&NA\\
				{}&{} &$L=16$&\textbf{0}&\textbf{1}&0.023&0.99&0.097&0.962&0.08&0.982&0.143&0.958&0.053&0.981&0.937&0.005&0.186&0.946&NA&NA\\
				{}&{} &$L=20$ &\textbf{0}&\textbf{1}&0.021&0.99&0.115&0.961&0.09&0.984&0.147&0.956&0.07&0.976&0.95&0.003&0.162&0.965&NA&NA\\
				\clineB{1-21}{2}
				\specialrule{1pt}{2pt}{0pt}
				\multicolumn{2}{?c|}{\multirow{3}{*}{\spheading{\centering \textbf{DP Process}}}}&${\rho}/{\sigma} = 1$ &\textbf{0.0007}&$\,$\textbf{0.975}$\,$&0.392&0.007&0.725&0.006&0.725&0.006&0.645&0.006&0.581&0.006&0.281&0&0.282&0.006&0.31&0\\
				{}&{} &${\rho}/{\sigma} = 5$ &\textbf{0.0006}&\textbf{0.954}&0.438&0.007&0.74&0.007&0.74&0.007&0.661&0.006&0.619&0.007&0.327&0&0.328&0.006&0.306&0\\
				{}&{} &${\rho}/{\sigma} = 9$ &\textbf{0.0008}&\textbf{0.972}&0.411&0.008&0.729&0.005&0.729&0.005&0.652&0.005&0.594&0.006&0.321&0&0.323&0.007&0.007&0.949\\
				\clineB{1-21}{2}
				\specialrule{1pt}{0pt}{0pt}
			\end{tabular}
		\end{center}
	\end{table*}

	In this section, we validate the performance of our algorithm numerically through simulations in synthetic as well as real datasets. We compare the performance with state-of-the-art subspace clustering algorithms - {SSC-ADMM\cite{elhamifar2013sparse}, SSC-OMP\cite{you2016scalable}, EnSC\cite{you2016oracle}, TSC\cite{heckel2015robust}, ALC\cite{ma2007segmentation} and LRR\cite{liu2010robust} 
	and another recent algorithm BDR-Z\cite{lu2019subspace}}. The codes for these algorithms are obtained from respective authors. All these algorithms, except ALC, require us to provide an estimate of the number of clusters, $L$ apriori, which we denote as $L_{in}$. {However, as mentioned in \cite{heckel2015robust}, we can estimate $L$ by eigen-gap heuristic - we denote $L$ estimated using this technique as $\hat{L}_{eg}$. We also denote the number of clusters estimated by our algorithm as $\hat{L}_{our}$.  Throughout the experiments, for the existing algorithms, we have used the default or tuned parameters provided by the authors in their codes, unless stated explicitly}
	. For our algorithm, we use the initial clustering described in Section \ref{sinit_clust} unless otherwise specified. The best performance in each experiment is given in \textbf{boldface}.
	
	\subsection{Metrics for comparison}
	
	We compare the performance in terms of \textit{Clustering Error} (CE) and \textit{Normalized Mutual Information} (NMI). 
	Clustering error \cite{heckel2015robust} is defined as the fraction of points misclassified by the algorithm. It is computed as follows. Let $c_i$ and $\tilde{c}_i,\ i=1,2,\ldots N$ denote respectively the true cluster label of the point $\mathbf{m}_i$ and the label assigned to it by the algorithm. Then,
	\begin{equation*}\label{e_ce}
	\mbox{CE}=\min_{\pi}\bigg(1-\dfrac{1}{N}\sum_{i=1}^{N}\mathbb{I}\left(c_i,\pi(\tilde{c}_i)\right)\bigg),
	\end{equation*}
	where $\mathbb{I}(a,b)=1$ if $a=b$, $0$ otherwise. $\pi(\tilde{c}_i)$ is the one-one reassignment of the label $\tilde{c}_i$, $\pi(\tilde{c}_i)\in\{1,2,\ldots ,L\}$ where $\tilde{c}_i\in\{1,2,\ldots ,\hat{L}\}$. We compute Normalized Mutual Information \cite{strehl2002cluster} as
	\begin{equation*}\label{e_nmi}
	\mbox{NMI}=\dfrac{\mathcal{I}(\mathcal{C}_L;\hat{\mathcal{C}}_{\hat{L}})}{0.5\big(\mathcal{H}(\mathcal{C}_L)+\mathcal{H}(\hat{\mathcal{C}}_{\hat{L}})\big)}\, ,
	\end{equation*} 
	where $\mathcal{H}(\cdot)$ and $\mathcal{I}(\cdot ;\cdot)$ respectively denote the empirically computed entropy of the cluster and mutual information between clusters. CE close to 0 and NMI close to 1 are desirable.
	\subsection{Results on Synthetic Datasets}
	\subsubsection{Random Subspace Models} We first illustrate the results if the data points are from Model \ref{model_ss}. It is known that for the points $\mathbf{x}_i$'s to be uniformly distributed in $\mathcal{U}_k\cap\mathbb{S}^{n-1}$, the individual coordinates of $\mathbf{m}_i$'s have to sampled independently from a standard normal distribution \cite{cai2013distributions}. We call this dataset as `Subspace-Normal'. We also show the results in the random subspace model when we sample individual coordinates of $\mathbf{m}_i$'s from a standard uniform distribution. We call this one as `Subspace-Uniform'.
	\begin{table}[h!]
		\begin{center}
			\caption{Error on estimated number of clusters on Synthetic datasets}
			\label{tab_ss_L_est}
			\setlength{\tabcolsep}{7pt}
			\begin{tabular}{?c?c?c|c|c?}
				\clineB{2-5}{2}
				\multicolumn{1}{c?}{} &\multicolumn{1}{c?}{\centering \textbf{$|\boldsymbol{L}-\boldsymbol{\hat{L}}|$}} & \textbf{Ours} & \textbf{Eigen-gap} &  \textbf{ALC} \\
				\clineB{2-5}{2}
				\specialrule{1pt}{2pt}{0pt}
				{\multirow{3}{*}{\spheading{\centering \textbf{Subspace-Normal}}}}&Mean &       \textbf{0}   & 13.23   & \textbf{0}\\
				{}&Median & 0 & 0 &0\\
				{}&Std &    0     &    93.43 &    0 \\
				\clineB{1-5}{2}
				\specialrule{1pt}{2pt}{0pt}
				{\multirow{3}{*}{\spheading{\centering \textbf{Subspace-Uniform}}}}&Mean &       \textbf{0}   & 14.94   & \textbf{0}\\
				{}&Median & 0 & 0 &0\\
				{}&Std &    0     &    84.387  &    0 \\
				\clineB{1-5}{2}
				\specialrule{1pt}{2pt}{0pt}
				{\multirow{3}{*}{\spheading{\centering \textbf{Subspace-Dependent}}}}&Mean &       \textbf{0}   & 10.415   & 13.25\\
				{}&Median & 0 & 0 &13.25\\
				{}&Std &    0     & 28.84  &  0\\
				\specialrule{1pt}{0pt}{0pt}
			\end{tabular}
		\end{center}
	\end{table} 
	For both these scenarios, we have taken 1000 points in $\mathbb{R}^{100}$, show results for $L = 4,7,10$ with a roughly equal number of points in each $L$. The dimension of each subspace is chosen to be 10. The results shown in Table \ref{tab_combined1} are averaged over 50 trials for each L. {For existing algorithms (except ALC) we provide the number of clusters estimated from eigen-gap heuristic as input, i.e., $L_{in}=\hat{L}_{eg}$.} {For ALC, we set have tried several values for $\epsilon$ and $\epsilon=1$ gave perfect recovery of subspaces in all the trials. We use the same $\epsilon$ throughout the remainder of the paper}. {In Table \ref{tab_combined1}, we also show results for dependent subspaces. Here, we choose $100$ basis vectors for $\mathbb{R}^{100}$. 
	Then, for each subspace, we choose $r_i = 10$ basis functions randomly from the collection and form $L=12,16,20$ such subspaces. Since $r_i \times L$ is greater than or close to $n$, we are bound to end up with subspaces which share common bases (more than 1 on many occasions), making them dependent. The data points from these subspaces are formed by the linear combination of the basis where the coordinates are randomly chosen from a standard uniform distribution.} 
	
	{In the subspace model, almost all the algorithms with $\hat{L}_{eg}$ supplied as input perform fairly well, while our algorithm achieves perfect clustering every time, with TSC achieving perfect clustering in most trials. ALC achieves perfect clustering for independent subspaces for all $L$ values but fails for dependent subspaces with the same $\epsilon$ value, where it always ends up with one cluster. In dependent subspaces, the performance degrades considerably for many algorithms like LRR and ALC, while it degrades marginally for others. In all the cases, our algorithm clusters perfectly. It is evident that the quality of the estimate of $L$, $\hat{L}_{eg}$ determines the success of the algorithms. In Table \ref{tab_ss_L_est}, we show the absolute error in the estimate of $L$, using eigen-gap, ALC, as well as our method, where the values are averaged over all the cases in Table \ref{tab_combined1} for all the synthetic datasets except DP process. Our method estimates the number correctly in all trials, while eigen-gap estimates it correctly for most trials, but overestimates the number by a very large value for a few of the trials as seen by the large values for the mean and standard deviation of the error with the median remaining at 0. ALC estimates it correctly with 0 error for independent subspaces. However, it fails for the case of dependent subspaces.}\\ 
	\subsubsection{Dirichlet Process Model} To illustrate the versatility of our algorithm in adapting to other clustering data models which are not subspace models, we show the results when the data points are obtained from Dirichlet Process (DP)\cite{kulis2011revisiting}. The results are also summarized in Table \ref{tab_combined1}. For each of the listed $\rho/\sigma$, we performed 50 trials, generating $1000$ points from $\mathbb{R}^{100}$ each time. Here, $\rho$ represent spread of cluster centroids, and $\sigma$ represent spread of points around their respective centroid. Larger $\rho/\sigma$ signifies widely separated dense clusters. For this dataset, we also include results from $\lambda$-means algorithm\cite{comiter2016lambda}, a parameter tuning algorithm developed for DP model data. It tunes for the parameter $\lambda$ in DP-means algorithm\cite{kulis2011revisiting}. For each $\rho/\sigma$, we tune for $\lambda$ in the first trial and use that value for remaining trials. It is observed that $\lambda$-means performs badly when $\rho/\sigma$ is small. This is because it is a distance-metric based algorithm and at smaller $\rho/\sigma$, clusters are not well separated. {For the case of DP dataset, we provide the true number of clusters $L$, as input to all the algorithms instead of eigen-gap estimates. Even then, the performances of existing algorithms are poor as evident from Table \ref{tab_combined1}. This shows the lack of adaptability to a non-subspace model for clustering}. We can observe that our algorithm performs consistently better for all $\rho/\sigma$ with mean CE $\leq0.08\%$ and mean NMI$\geq0.954$.
	\begin{table}
		\begin{center}
			\caption{Details of some real datasets}
			\label{tab_dst_dset}
			\setlength{\tabcolsep}{3pt}
			\begin{tabular}{?c? *{2}{c|}c?}
				\specialrule{1pt}{0pt}{0pt}
				
				{\textbf{\centering Dataset                                            }} & {$\quad\,\boldsymbol{n}\quad\,$} & {$\quad\,\boldsymbol{N}\quad\,$} & {$\quad\,\boldsymbol{L}\quad\,$} \\
				\specialrule{1pt}{0pt}{0pt}
				\specialrule{1pt}{1pt}{0pt}
				{Gene Expression Cancer RNA-Seq\cite{weinstein2013cancer}\cite{Dua:2019}} &               16383              &                801               &                 5                \\
				{Novartis multi-tissue\cite{novartis}                                   } &                1000              &                103               &                 4                \\
				{Wireless Indoor Localization\cite{rohra2017user}\cite{Dua:2019}        } &                 7                &               2000               &                 4                \\
				{Phoneme\cite{hastie1995penalized}                                      } &                256               &                4509              &                 5                \\
				{MNIST\cite{lecun1998gradient}                                                      } &                784               &       40000                     &                 10               \\
				{Extended Yale-B\cite{GeBeKr01,KCLee05}                                 } &                32256             &                2432              &                38               \\
				{Hopkins-155\cite{tron2007benchmark}}&30-200&39-556&2,3\\
				\specialrule{1pt}{0pt}{0pt}
				
			\end{tabular}
		\end{center}
	\end{table}
	\begin{table*}
		\begin{center}
			\caption{Performance of subspace clustering algorithms on some real datasets}
			\label{tab_real_ce}
			\setlength{\tabcolsep}{11pt}
			\begin{tabular}{?c? *{3}{Y|Y?}Y|Y?Y|Y?}
				\specialrule{1pt}{0pt}{0pt}
				\multirow{2}{1.25cm}[-1.1em]{\textbf{Algorithm}} &  \multicolumn{2}{L?}{\textbf{Gene Expression Cancer RNA-Seq}} & \multicolumn{2}{L?}{\textbf{Novartis multi-tissue}} & \multicolumn{2}{L?}{\textbf{Wireless Indoor Localization}}  & \multicolumn{2}{L?}{\textbf{Phoneme}}\\
				\cline{2-9}
				{                                              } &  \textbf{CE }  & \textbf{NMI}    &  \textbf{CE }   & \textbf{NMI}    &  \textbf{CE }   & \textbf{NMI}    &  \textbf{CE }   & \textbf{NMI}        \\
				\specialrule{1pt}{0pt}{0pt}
				\specialrule{1pt}{1pt}{0pt}
				Ours                       & \textbf{0.0087}& \textbf{0.9769} & \textbf{0.1845} &     \textbf{0.7247 }     & \textbf{0.1720} & \textbf{0.7510} & \textbf{0.2790} &      0.6222          \\
				\hline
				SSC-ADMM                     &    0.0724      &      0.9363     &    0.8058      &      0.5814        &      0.9975     &      0.3085     &      0.3387     &      0.7688           \\
				\hline
				SSC-OMP                     &     0.0849     &    0.8716       &    0.8155      &      0.5042        &      0.9975     &      0.3085     &      0.5445     &      0.2571         \\
				\hline
				EnSC                       &    0.0674      &    0.9386       &     0.8058     &       0.5896       &      0.9975     &      0.3085     &      0.3251     & \textbf{0.7688}        \\
				\hline
				BDR-Z                      &    0.0587      &    0.9454       &     0.8058      &     0.5814       &      0.9980     &      0.3084     &      0.4400     &      0.3722             \\
				\hline
				TSC                       &    0.0612      &    0.9441       &     0.8058      &       0.6210      &      0.9975     &      0.3085     &      0.3227    &      0.7629             \\
				\hline
				ALC                       &       0.7079      &        0.6298      &     0.7282      &       0.3469       &      0.7500     &      0          &      0.8960        &      0.4390                \\
				\hline
				LRR                       &     0.0637     &       0.9168    &    0.6311       &       0.6557       &      0.9975     &      0.3085     &      0.4424     &      0.5682             \\
				\hline
				\specialrule{1pt}{0pt}{0pt}
				
			\end{tabular}
		\end{center}
	\end{table*}
	
	\subsection{Results on Real Datasets}\label{sRealDatasets}
	We illustrate the performances of our algorithm and other subspace clustering algorithms on some real datasets. Table \ref{tab_dst_dset} provides the details of the datasets we have used. The first two datasets are gene expression datasets. In such applications, the number of clusters in the dataset may not be known apriori. 
	{We have also performed experiments on popular image datasets, namely MNIST \cite{lecun1998gradient} and extended Yale-B \cite{GeBeKr01,KCLee05} as well as the motion segmentation problem in Hopkins-155 dataset \cite{tron2007benchmark}. The results for the first four datasets are given in Table \ref{tab_real_ce}, where all the existing algorithm except ALC are provided with $L_{in}=\hat{L}_{eg}$. For ALC, we provide $\epsilon=1$. The comparison of estimates of the number of clusters using our algorithm with eigen-gap and ALC for various datasets are given in Table \ref{tab_real_L_est}.}
	
	{From Table \ref{tab_real_ce}, we can see that our algorithm outperforms all the other algorithms in most of the cases for these datasets. This can be interpreted as follows. The eigen-gap estimates for $L$, given in Table \ref{tab_real_L_est}, that we provide as an input to the other algorithms are not necessarily very accurate. This, along with the improper setting of tuning parameters, affects the algorithm performances in Table \ref{tab_real_ce}. Our algorithm by virtue of being non-parametric performs well across datasets without tuning or ground truth knowledge. Though our algorithm is developed for high dimensional data, its performance in wireless indoor localization dataset illustrates that we can also use it for low dimensional datasets. These results show the adaptability of the proposed algorithm across datasets of different types from diverse domains.}
	\begin{table}
		\begin{center}
			\caption{Estimated number of clusters in some real datasets}
			\label{tab_real_L_est}
			\setlength{\tabcolsep}{2pt}
			\begin{tabular}{?c? c|c| c?}
				\clineB{1-4}{2}
				\multirow{2}{*}{\textbf{Dataset}}&\multicolumn{3}{c?}{\textbf{Estimated number of clusters}} \\
				\cline{2-4}
				{                            } & \textbf{$\,\ \ \hat{\boldsymbol{L}}_{\boldsymbol{our}}\ \ \,$} & $\;$\textbf{ $\ \ \hat{\boldsymbol{L}}_{\boldsymbol{eg}}\ \ \ $}&$\;$\textbf{ $\hat{\boldsymbol{L}}_{\boldsymbol{ALC}}\;$}\\[-\arrayrulewidth]
				\clineB{1-4}{2}
				\specialrule{1pt}{2pt}{0pt}
				Gene Expression Cancer RNA-Seq &                                   6                                   &                                   6       &18                            \\
				Novartis multi-tissue          &                                   5                                   &                                   21    &1                              \\
				Wireless Indoor Localization   &                                   11                                  &                                  1999    &1                             \\
				Phoneme                        &                              \textbf{5}                               &                                   3 &45                                  \\
				MNIST                          &                                  \textbf{10}                                  &                                  39997     &-                             \\ 
				Extended Yale-B                &                                  39                                  &                                  2431        &269                          \\ 
				Hopkins-155 (2 objects)&$2.0083$&$118.32$ &$1.333$\\
				Hopkins-155 (3 objects)&$2.2$&$215.68$ &$1.314$\\
				\specialrule{1pt}{0pt}{0pt}
				
			\end{tabular}
		\end{center}
	\end{table}
	
	{The results in Table \ref{tab_real_L_est} reconfirm what we observed in Table \ref{tab_ss_L_est} that eigen-gap heuristic occasionally selects a very large number of clusters.  For instance, consider the wireless indoor localization dataset, the eigen-gap heuristic provides 1999 clusters, i.e., it considered each point (except a pair) as a cluster. This results in high CE for the algorithms using that estimate. Here, our algorithm outputs 11 clusters, and the CE is quite low, suggesting that the excess 4 clusters are smaller in size. Also, in Phoneme dataset, our method has recovered the exact number of clusters, and it predicted one additional cluster in Gene Expression Cancer RNA-Seq and Novartis multi-tissue datasets. Hence, our algorithm is better at finding the number of clusters in these datasets.}
	\subsubsection{Results on image datasets} {We have also performed experiments on three image datasets - the popular MNIST dataset \cite{lecun1998gradient}, face clustering using the extended Yale-B dataset \cite{GeBeKr01,KCLee05} and Hopkins-155 dataset \cite{tron2007benchmark} for motion segmentation. For each image in MNIST, we use extracted features from ScatNet\cite{bruna2013invariant} and then projected the features to dimension 500 using PCA and use them for all the algorithms. Due to memory limitations, we performed experiments by taking only 40000 data samples. Motivated from \cite{peng2018structured}, we obtained DSIFT features for extended Yale-B dataset and projected them to dimension 500 {using PCA} and then obtained results for all the algorithms. We use the dataset as it is for Hopkins-155.}
	
	{The parameters of EnSC are tuned for MNIST dataset, and those of SSC-ADMM are tuned to Hopkins. Note that eigen-gap heuristic seems to give extremely bad results in estimating the number of clusters (see Table \ref{tab_real_L_est}) in image datasets. Our method does not require the number of clusters apriori. However, it requires a good set of initial clusters. In Table \ref{tab_image}, all existing algorithms are given the true number of clusters. Otherwise, if the eigen-gap heuristic is used, the results would be extremely poor with CE close to 1 and NMI closer to 0. Hence to be also fair to our method, which does not know the true number of clusters, we provide it with a pure set of initial clusters. For MNIST, we provide 2000 initial clusters, for extended Yale-B we give 266 clusters, and for Hopkins, we use 5 points per initial cluster in each video.}
	
	{As seen from Table \ref{tab_image}, our method performs at par with the state-of-the-art in MNIST, using ScatNet features. Many algorithms for the large MNIST dataset are really slow. for instance, ALC did not converge even after days of running, and hence those results are not reported. {Our algorithm, however, even ran with the whole dataset (70000 points), without a problem. It had a CE of $0.0018$ and NMI of $0.9951$ for the whole MNIST dataset when provided with 2800 initial clusters.} For extended Yale-B using the feature extraction we described earlier, our algorithm outperforms all other methods {with the lowest CE and highest NMI, even when other algorithms are provided with the right number of clusters and the parameters tuned through grid search}. The influence of feature extraction for images on the success of the proposed method is discussed in detail in the next section. For the motion segmentation problem in Hopkins-155, our results are again at par with the state-of-the-art with only SSC-ADMM (which is tuned perfectly for this dataset) outperforming us marginally in terms of CE. }
	
	\begin{table}
		\begin{center}
			\caption{Performance of subspace clustering algorithms on image datasets}
			\label{tab_image}
			\setlength{\tabcolsep}{4pt}
			\begin{tabular}{?c?c? *{2}{c|} c?}
				\clineB{1-5}{2}
				\multirow{2}{*}{\textbf{Algorithm}} & \multirow{2}{*}{\textbf{Metric}} & \multicolumn{3}{c?}{\textbf{Dataset}} \\
				\cline{3-5}
				{                                   } & { } & \textbf{$\quad\, $MNIST$\quad\,$}  & \textbf{$\,$Ext. Yale-B$\,$} &  \textbf{Hopkins-155}      \\[-\arrayrulewidth]
				\clineB{1-5}{2}
				\specialrule{1pt}{2pt}{0pt}
				\multirow{2}{*}{\centering Ours}      &  CE &          \textbf{ 0.0015 }         &        \textbf{ 0.0074}      &            0.0938          \\
				{                                  }  & NMI &           \textbf{ 0.9959}         &        \textbf{ 0.9979 }     &            0.7406          \\
				\cline{1-5}{}
				\multirow{2}{*}{\centering SSC-ADMM}  &  CE &               0.1523               &              0.0185          &   \textbf{0.0479}          \\
				{                                  }  & NMI &               0.8540               &              0.9843          &   \textbf{0.8719}          \\
				\cline{1-5}{}
				\multirow{2}{*}{\centering SSC-OMP }  &  CE &               0.0532               &              0.0888         &            0.7087          \\
				{                                  }  & NMI &               0.8784               &              0.9748          &            0.0280          \\
				\cline{1-5}{}
				\multirow{2}{*}{\centering EnSC    }  &  CE &               0.0404               &              0.0465          &            0.2176          \\
				{                                  }  & NMI &               0.9085               &              0.9803          &            0.5603          \\
				\cline{1-5}{}
				\multirow{2}{*}{\centering BDR-Z   }  &  CE &               0.3496               &              0.0366          &            0.1531          \\
				{                                  }  & NMI &               0.5504               &              0.9874          &            0.7278          \\
				\cline{1-5}{}
				\multirow{2}{*}{\centering TSC     }  &  CE &               0.1650               &              0.0247          &            0.3735          \\
				{                                  }  & NMI &               0.8013                  &              0.9884         &            0.3673          \\
				\cline{1-5}{}
				\multirow{2}{*}{\centering ALC }      &  CE &                --                  &              0.3466       &            0.5900          \\
				{                                  }  & NMI &                --                  &              0.7942         &            0.2393          \\
				\cline{1-5}{}
				\multirow{2}{*}{\centering LRR     }  &  CE &               0.1831               &              0.3433          &            0.1246          \\
				{                                  }  & NMI &               0.8536               &              0.8810          &            0.7939          \\
				\clineB{1-5}{2}
				\specialrule{1pt}{0pt}{1pt}
				
			\end{tabular}
		\end{center}
	\end{table}
	\begin{table}
		\begin{center}
			\caption{Run time of algorithms on Ext. Yale-B dataset}
			\label{tab_r_time}
			\setlength{\tabcolsep}{3pt}
			\begin{tabular}{?c? c| c?}
				\specialrule{1pt}{0pt}{0pt}
				\multirow{2}{*}{$\ \ $\textbf{Algorithm}$\ \ $} & $\ $\textbf{No. of parameters}$\ $  & 	$\qquad $\textbf{Run time}$\qquad\ $\\
				{                                              }&    \textbf{(including $L_{in}$)}    &          \textbf{(in seconds)}          \\[-\arrayrulewidth]
				\clineB{1-3}{2}
				\specialrule{1pt}{2pt}{0pt}
				Ours                      &                  0                  &                   2.5              \\
				SSC-ADMM                  &                  2                  &                 21.3                 \\
				SSC-OMP                   &                  2                  &                  4.4                 \\
				EnSC                      &                  3                  &                   6.6              \\
				BDR-Z                     &                  3                  &                 444.5                \\
				TSC                       &                  2                  &                  7.6                 \\
				ALC                       &                  1                  &                   1397.1                    \\
				LRR                       &                  2                  &                   6267.1              \\
				\specialrule{1pt}{0pt}{0pt}
				
			\end{tabular}
		\end{center}
	\end{table}
	\begin{figure}[h]
		\begin{subfigure}{.5\textwidth}
			\centering
			\includegraphics[width=0.9\linewidth,height=4cm]{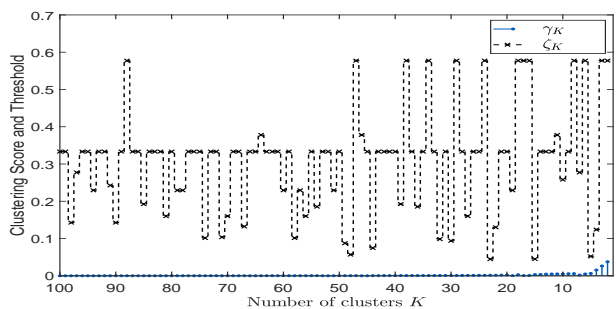}  
			\caption{With Raw pixels}
			\label{fig_yaleb_validation1}
		\end{subfigure}
		\begin{subfigure}{.5\textwidth}
			\centering
			\includegraphics[width=0.9\linewidth,height=4cm]{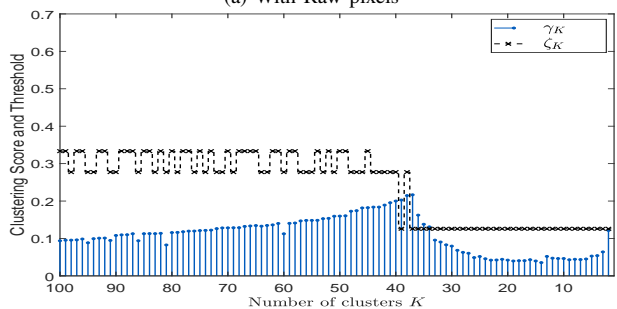}  
			\caption{With DSIFT features}
			\label{fig_yaleb_validation2}
		\end{subfigure}
		\caption{Clustering score $\gamma_{K}$ and threshold $\zeta_{K}$ in Ext. Yale-B dataset}
		\label{fig_yaleb_validation}
	\end{figure}
	\begin{figure}[h]
		\begin{subfigure}{.5\textwidth}
			\centering
			\includegraphics[width=0.9\linewidth,height=5cm]{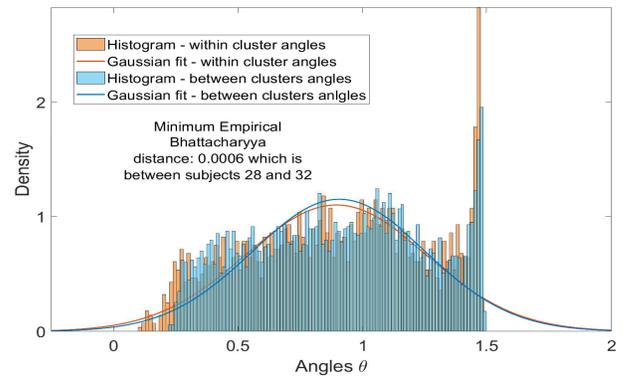}  
			\caption{With Raw pixels}
			\label{fig_yaleb_dist1}
		\end{subfigure}
		\begin{subfigure}{.5\textwidth}
			\centering
			\includegraphics[width=0.9\linewidth,height=5cm]{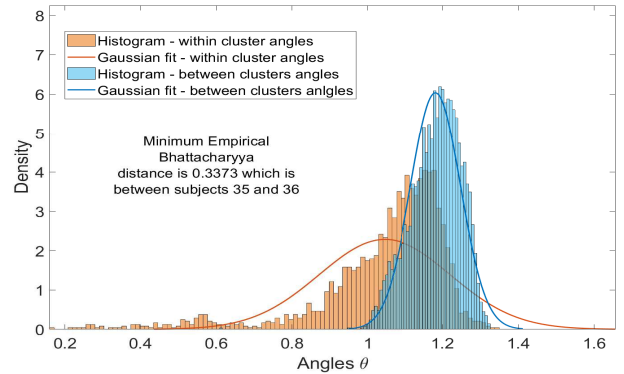}  
			\caption{With DSIFT features}
			\label{fig_yaleb_dist2}
		\end{subfigure}
		\caption{Distribution of within-cluster angles and between-clusters angles in Ext. Yale-B dataset  corresponding to minimum empirical Bhattacharyya distance }
		\label{fig_yaleb_dist}
	\end{figure}
	\section{Utility of our algorithm}\label{valid_results}
	{As stated previously, the proposed algorithm is designed such that it can perform parameter free clustering on a dataset, where the data vectors are such that there is sufficient difference in the statistical distribution between angles formed by points within a subspace and between subspaces. As observed from Fig. \ref{fig_gamma_K}, whenever this assumption holds, there is a drastic jump in $\gamma_K$ and rapid fall of $\zeta_{K}$ when $K=L$. Thus, the possibility of success of the algorithm, can also be very easily visualized if one were to look at the evolution of $\gamma_K$ and $\zeta_{K}$. In a dataset where the algorithm would do well, the $\gamma_K$ value spikes noticeably at a certain point $K$ where there is also a drastic fall in $\zeta_K$. }
	
	However, if the data are such that the distributions of angles formed by points within a subspace and of those between subspaces are very similar, then the algorithm will fail. There are many datasets, as highlighted in Section \ref{sNumerical_validations}, where the assumption holds approximately, and the algorithm can cluster effectively at high speed, without a hyperparameter. For some other popular datasets like the extended Yale-B, with the feature vector being vectorized image pixel values, $\gamma_K$ evolves smoothly, and $\gamma_{K}$ never crosses $\zeta_{K}$, as shown in Fig. \ref{fig_yaleb_validation} (a), indicating that the algorithm shall fail if we were to use the raw pixels as data vectors. This is potentially due to the fact that the inter-cluster diversity is less in this dataset, as noticed in \cite{zhang2012hybrid}. However, a suitable feature extraction technique, like DSIFT used in Section \ref{sRealDatasets} for extended Yale-B, ensures that there is sufficient separation between the statistical distribution of within-cluster and between cluster angles. This can be seen manifested in the behaviour of $\gamma_K$, as shown in Fig. \ref{fig_yaleb_validation} (b). The difference in the empirical statistical distributions of within-cluster and between-cluster angles before and after feature extraction for extended Yale-B dataset can be seen in Figs. \ref{fig_yaleb_dist} (a) and (b). There could exist a suitable feature extraction technique for every dataset, which could make the algorithm perform effective clustering in that dataset as well. But we have not pursued that line of research here since it is too domain-specific. We have demonstrated in Section \ref{sNumerical_validations}, the effectiveness of the algorithm in diverse domains and not just in image datasets.
	
	{Furthermore, even though our algorithm may fail in some datasets, the possibility of failure can be readily identified by looking at $\gamma_K$ and $\zeta_{K}$, without having to know the ground truth. Since the proposed algorithm is computationally efficient and tuning parameter free, one can apply the method to any dataset and see if the current feature vectors can be clustered effectively with the proposed method, with minimal effort by observing the evolution of $\gamma_K$ and $\zeta_{K}$. Most of the other state of the art methods involve using tuning parameters,  and to set them appropriately, one must have pre-hand knowledge of the ground truth. Even using the ground truth or with training sets, the time it takes to set appropriate tuning parameters is quite high. We provide the run time comparison for a single run of the algorithms in extended Yale-B dataset in Table \ref{tab_r_time}. The second column gives information about the number of parameters required as input to the algorithm. For example, SSC-ADMM requires the number of clusters and one more tuning parameter. All the algorithms are run on the same system for fairness, and all algorithms are provided with the true number of clusters as an input. Note that our algorithm is roughly $8$ times faster than SSC-ADMM, $550$ times faster than ALC and $2500$ times faster than LRR. It is evident that many existing algorithms take more time to obtain clustering, even with the predefined set of parameters. Setting the tuning parameters would take significant multiple of this time unless there is a predefined way to tune parameters other than grid search and cross-validation. }
	\section{Conclusions}
	In this paper, we have proposed a parameter free algorithm for subspace clustering, which distinguishes between the points from different subspaces using the characteristics of the distribution of angles subtended by the points. The algorithm, which works without apriori parameter knowledge, starts with a fine clustering and merges the clusters iteratively until the clustering score crosses a threshold. We have theoretically analysed the algorithm and derived the threshold under an assumption on the data model and also proposed a parameter free initial clustering method. The performance of the proposed algorithm has been studied {extensively in both synthetic as well as many real datasets}. It has been observed that the proposed method performs on par with other existing methods which use true parameter knowledge, in terms of clustering error and estimated number of clusters and outperforms them in many cases, especially when the true parameters are unknown. {In this work, we have used empirical Bhattacharyya distance as a discriminating criterion. However, one can use any other statistical distance provided one can derive the appropriate thresholds. This could be an interesting direction for future research.}

	\appendices
	\section{Proofs of results in Section \ref{sassuumpttions}}\label{appendix1}
	\begin{proof}[Proof of Lemma \ref{lemmaold}]
		Results in \cite{menon2019structured} is built on the basis of Lemma 12 from \cite{cai2013distributions} which gives the distribution of angles between randomly chosen points in $\mathbb{S}^{n-1}$. 
		Part a) is straight from Lemma 2 in \cite{menon2019structured}. Note that the angle between two points from different subspaces in Model  \ref{model_ss} is statistically same as that between two points chosen uniformly at random from $\mathbb{S}^{n-1}$ as in Lemma 9 in \cite{menon2019structured} and part b) follows Lemma 1  in \cite{menon2019structured}. For the convergence in distribution in (b):
				Let $\tau=\sqrt{n-2}\left(\theta_{ij}-\frac{\pi}{2}\right)$. Using  
		expression for $h_{n}(\theta)$, the log density of $\tau$ can be obtained as
		\begin{equation*}
		\begin{split}	\log_e g(\tau) =C_n+(n-2)&\log_e\cos\left(\frac{\tau}{\sqrt{n-2}}\right),\\ &\tau\in\left[-\sqrt{n-2}\,\frac{\pi}{2} ,	 \sqrt{n-2}\,\frac{\pi}{2}\right],
		\end{split}
		\end{equation*}
		where $\exp({C_n})$ is the normalization term depending on $n$ alone. Using Taylor expansion about $\tau=0$,
		\begin{equation*}
		\begin{split}
		\log_eg(\tau)&=C_n+ (n-2)\left[-\frac{\tau^2}{2(n-2)}-\frac{\tau^4}{12(n-2)^2}-\ldots\right] \\ 
		&=C_n-\frac{\tau^2}{2}+{O}\left(\frac{1}{n}\right).
		\end{split}
		\end{equation*}
		$\Rightarrow g(\tau)\propto e^{-\frac{\tau^2}{2}}\mbox{ at the rate of }{O}\left(\frac{1}{n}\right)$. Thus,  $\tau\xrightarrow{\mathcal{D}}\mathcal{N}(0,1)$ and hence $\theta_{ij}\xrightarrow{\mathcal{D}}\mathcal{N}\left(\frac{\pi}{2},\frac{1}{n-2}\right)$ at the rate of ${O}\left(\frac{1}{n}\right)$.
	\end{proof}
	\section{Proof of results in Section \ref{stheory}}\label{appendix2}
	The following lemma is used to design subsets of independent samples for calculation of estimates.
	\begin{lemma}\label{lsubsets}
		Let the constituent clusters be $I_i = \{i_1,i_2,\ldots, i_{\omega_i}\}$ and $I_j = \{j_1,j_2,\ldots ,j_{\omega_j}\}$ and let them contain only indices of points from the same subspace $\mathcal{U}_a$. Then, $W_i = \{\theta_{i_pi_q}\text{	}|\text{	} p,q\!=\!1,2,\ldots, \omega_i, \ p\! <\! q  \}$ and $B_{ij} = \{\theta_{i_pj_q}\text{	}|\text{	} p = 1,2,\ldots, \omega_i, \  q = 1,2, \ldots ,\omega_j  \}$. Define $W^{t_i}_i = \{\theta_{i_{(2k-1)}i_{(2k)}}\text{	}|\text{	}k = 1,2,\ldots, \lfloor\frac{\omega_i}{2}\rfloor\}$. Then, $|W^{t_i}_i| = t_i = \lfloor \frac{\omega_i}{2} \rfloor$. Let $\omega = \min (\omega_i,\omega_j)$. Define $B^{t_j}_{ij} = \{\theta_{i_pj_p}\text{	}|\text{	} p = 1,2,\ldots, \omega \}$. Then, $|B^{t_j}_{ij}| = t_j = \omega$. The following holds on the estimates under Assumption \ref{amain1}:
		\begin{itemize}
			\item[a)] $\mu_{w_{i}t_i}$ and $\mu_{b_{ij}t_j}$ are calculated using independent angle samples. 
			\item[b)] $\mu_{w_{i}t_i}$ and $\mu_{b_{ij}t_j}$ are independent.
			\item[c)] The corresponding variance estimates $\sigma^2_{w_{i}t_i}$ and $\sigma^2_{b_{ij}t_j}$ are also independent.
		\end{itemize}
	\end{lemma}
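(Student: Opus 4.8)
The plan is to derive all three claims from a single structural fact about the designed subsets: in the combined collection $W^{t_i}_i\cup B^{t_j}_{ij}$ every data index is used \emph{at most twice} — at most once in a within-angle $\theta_{i_{2k-1}i_{2k}}$ and at most once in a between-angle $\theta_{i_pj_p}$. I would make this precise by viewing each angle sample as an edge on the vertex set $I_i\cup I_j$: the samples in $W^{t_i}_i$ form a (near-)perfect matching on $I_i$, those in $B^{t_j}_{ij}$ form a matching between $I_i$ and $I_j$, and their union is a graph of maximum degree two, hence a disjoint union of paths (generically $j_{2k-1}\text{--}i_{2k-1}\text{--}i_{2k}\text{--}j_{2k}$, with shorter pieces when $\omega_i$ is odd or $\omega_i\neq\omega_j$) with no cycles. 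This picture organises the dependence structure.

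For part (a), within $W^{t_i}_i$ the index pairs $\{i_{2k-1},i_{2k}\}$ are pairwise disjoint, so no two of these angles share a common point; by the last clause of Assumption \ref{amain1} (angles without a common point are mutually independent) the samples used for $\mu_{w_i t_i}$ and $\sigma^2_{w_i t_i}$ are mutually independent. The same argument applies to $B^{t_j}_{ij}$: the pairs $\{i_p,j_p\}$ are pairwise disjoint since the $i_p$ are distinct, the $j_p$ are distinct, and $I_i\cap I_j=\O$ as the constituent clusters are disjoint. Mutual independence inside each subset is exactly what lets $\mu_{w_i t_i},\sigma^2_{w_i t_i}$ and $\mu_{b_{ij}t_j},\sigma^2_{b_{ij}t_j}$ behave as the sample mean and sample variance of i.i.d.\ draws, as required downstream in Lemma \ref{ldistr}.

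For parts (b) and (c) it suffices to prove the stronger statement that the within-vector $\mathbf{w}=(\theta:\theta\in W^{t_i}_i)$ and the between-vector $\mathbf{b}=(\theta:\theta\in B^{t_j}_{ij})$ are independent, since $\mu_{w_i t_i},\sigma^2_{w_i t_i}$ are functions of $\mathbf{w}$ only and $\mu_{b_{ij}t_j},\sigma^2_{b_{ij}t_j}$ are functions of $\mathbf{b}$ only. I would prove this by conditioning on the directions of the points of $I_i$: these determine $\mathbf{w}$ completely, whereas each between-angle $\theta_{i_pj_p}$ is the angle between an already-fixed direction and the fresh direction $\mathbf{x}_{j_p}$, which is drawn uniformly on $\mathcal{U}_a\cap\mathbb{S}^{n-1}$ independently of everything else; hence the conditional law of each $\theta_{i_pj_p}$ is that of the angle between a fixed unit vector and a uniform one and does not depend on the conditioning. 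Thus $\mathbf{b}$ is conditionally independent of, and conditionally distributed identically regardless of, the $I_i$-directions, so $\mathbf{b}\perp\mathbf{w}$, which gives (b) and (c) simultaneously.

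The main obstacle is that Assumption \ref{amain1} on its own supplies only pairwise independence among angles sharing a common point, and this is genuinely too weak to conclude $\mathbf{w}\perp\mathbf{b}$: in each path component of the conflict graph the middle within-angle is merely pairwise independent of each of the two incident between-angles, and pairwise independence need not upgrade to joint independence of a sum. The clean way out is the conditioning argument above, which is rigorous under Model \ref{model_ss} because the second cluster contributes fresh, rotationally invariant randomness; alternatively one can read the pairwise-independence clause of Assumption \ref{amain1} as the stronger bookkeeping rule ``any set of angles in which each point occurs at most twice is mutually independent'' — precisely the at-most-two-uses-per-point property that the subset construction was engineered to satisfy — and then combine per-component independence with the full independence of distinct path components to close the gap. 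I would present the geometric conditioning argument as the primary route and note the abstract alternative as a remark.
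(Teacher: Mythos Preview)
Your argument for part (a) is exactly the paper's: within each of $W^{t_i}_i$ and $B^{t_j}_{ij}$ no data index is repeated, so the angles in each subset share no common point and are mutually independent by the trivial clause of Assumption~\ref{amain1}. For parts (b) and (c) the paper uses the very structural fact you isolate---that the union $W^{t_i}_i\cup B^{t_j}_{ij}$ contains at most two angles per data point---and then simply asserts that, together with the pairwise-independence clause of Assumption~\ref{amain1}, this makes the whole union a family of independent samples, from which (b) and (c) follow.

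You are more careful than the paper here, and rightly so: you flag that pairwise independence of angles sharing a point does not by itself upgrade to mutual independence of a path component such as $\theta_{i_{2k-1}j_{2k-1}},\theta_{i_{2k-1}i_{2k}},\theta_{i_{2k}j_{2k}}$, and hence not to $\mathbf{w}\perp\mathbf{b}$. Your conditioning argument---fix the $I_i$-directions, note that each $\theta_{i_pj_p}$ then depends only on the fresh uniform direction $\mathbf{x}_{j_p}$ and has a conditional law independent of the conditioning---is a clean and rigorous fix under Model~\ref{model_ss}. Your alternative, reading Assumption~\ref{amain1} as ``any collection of angles in which each point appears at most twice is mutually independent,'' is precisely the tacit strengthening the paper's one-line proof relies on. Either route closes a gap the paper leaves open; presenting the geometric argument as primary and the abstract reading as a remark is a sensible choice.
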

	\begin{proof}
		As seen in the design of the set $W^{t_i}_i$, only one angle is chosen per data point in the set and hence the estimates using this set uses independent samples under Assumption \ref{amain1}. Also, the between angle set $B^{t_j}_{ij}$  contains only one angle per data point which are independent. This proves part a). When we see both the sets together, they contain at most 2 angles formed by a point and under Assumption \ref{amain1}, the angles are pairwise independent if it involves the same point. Hence, $W^{t_i}_i \cup B^{t_j}_{ij}$ contains only independent samples. Hence, the estimates which use disjoint samples from $W^{t_i}_i \cup B^{t_j}_{ij}$ are independent. This proves part b) and c). 
	\end{proof}
	{	
	\begin{proof}[Proof of Lemma \ref{ldistr}]
		Part a): Since we are working with independent samples as designed in Lemma \ref{lsubsets} sampled from a Gaussian distribution with mean $\nu_a$ and variance $\rho_a^2$ as per Assumption \ref{amain1}, $\mu_{w_{i}t_{ij}}, \mu_{b_{ij}t_{ij}} \sim \mathcal{N}(\nu_{a},\frac{\rho_a^2}{t_{ij}})$ and $\sigma^2_{w_{i}t_{ij}},\ \sigma^2_{b_{ij}t_{ij}} \sim \frac{\rho_a^2}{t_{ij}-1} \chi^2_{t_{ij}-1}$
		which is a straight forward result on sample mean and variance of a Gaussian random sample as in Theorem 5.3.1 in \cite{casella2002statistical}. Then, $X_{ij}$ is the difference of independent Gaussian random variables with same mean. 
		\begin{align*}
			\Rightarrow \quad X_{ij}  &\sim \mathcal{N}\left(0,\frac{2\rho_{a}^2}{t}\right) \ \mbox{ and }\
			\frac{t}{2\rho_{a}^2}\,X_{ij} ^2 \sim  \chi^2_{1}. 
		\end{align*}
		Part b) Similar to Part a), here we have $$\sigma^2_{w_{i}t_{ij}} \sim \frac{\rho_a^2}{t_{ij}-1} \chi^2_{t_{ij}-1}\qquad \sigma^2_{b_{ij}t_{ij}} \sim \frac{\rho_{ab}^2}{t_{ij}-1} \chi^2_{t_{ij}-1}$$ $$(\mu_{w_{i}t_{ij}} - \mu_{b_{ij}t_{ij}}) \sim \mathcal{N}\left(\nu_a - \nu_{ab}, \frac{\rho_a^2+\rho_{ab}^2}{t_{ij}}\right).$$
		From the distribution of difference in means, we can also define the distribution of its square as a scaled non-central $\chi^2$ distribution.
		\begin{align*}
			\frac{t_{ij}}{\rho_a^2+\rho_{ab}^2}X_{ij}^2 \sim \chi^2\left(k = 1, \lambda = t_{ij}\frac{(\nu_a-\nu_{ab})^2}{\rho_a^2+\rho_{ab}^2}\right). \tag*{\qedhere}
		\end{align*}
	\end{proof}}
{\begin{proof}[Proof of Theorem \ref{t2}]
	From (\ref{edistd}) in Lemma \ref{ldistr}:
	\begin{equation}\label{esammu}
	\begin{aligned}
	&\frac{t_{ij}}{\rho_a^2+\rho_{ab}^2}X_{ij}^2 \geq  t_{ij} \log_{e}(1+M_{ab}) \alpha_{t_{ij}}\hskip5pt w.p\\&\qquad\qquad 
	1 -F_{\chi^2(1,t_{ij}M_{ab}^2)}\big(t_{ij}\log_{e}(1+M_{ab}) \alpha_{t_{ij}}\big)
	\end{aligned}
	\end{equation}
	Using the distributions for sample variances in (\ref{edistd}), 
	\begin{equation}\label{esamvar}
	\begin{aligned}
	&2\rho^2_a - \rho^2_a \alpha_{t_{ij}} \leq \sigma^2_{w_{i}t_{ij}} \leq \rho^2_a\alpha_{t_{ij}}\qquad w.p\\&
	F_{\chi^2_{t_{ij}-1}}\big((t_{ij}-1)\alpha_{t_{ij}}\big) - F_{\chi^2_{t_{ij}-1}}\big((t_{ij}-1)(2-\alpha_{t_{ij}})\big)
	\end{aligned}
	\end{equation}
	and
	\begin{equation}\label{esamvar1}
	\begin{aligned}
	&2\rho^2_{ab} - \rho^2_{ab} \alpha_{t_{ij}} \leq \sigma^2_{b_{ij}t_{ij}} \leq \rho^2_{ab}\alpha_{t_{ij}}\qquad w.p\\& F_{\chi^2_{t_{ij}-1}}\big((t_{ij}-1)\alpha_{t_{ij}}\big) - F_{\chi^2_{t_{ij}-1}}\big((t_{ij}-1)(2-\alpha_{t_{ij}})\big).
	\end{aligned}
	\end{equation}
	The independent events (\ref{esammu}), (\ref{esamvar}) and (\ref{esamvar1}) %
	occur together $w.p$ 
	\begin{align*}
		&\Big[F_{\chi^2_{t_{ij}-1}}\big((t_{ij}-1)\alpha_{t_{ij}}\big) - F_{\chi^2_{t_{ij}-1}}\big((t_{ij}-1)(2-\alpha_{t_{ij}})\big)\Big]^2\!\times\!\Big[1 \\&-F_{\chi^2(1,t_{ij}M_{ab}^2)}\big(t_{ij}\log_{e}(1+M_{ab}) \alpha_{t_{ij}}\big)\Big] = 1 - \delta_{t_{ij}}^{ab}.\mbox{
		Consider }
	\end{align*}
	$Y_{ij}\leq (\rho_a^2+\rho_{ab}^2)\alpha_{t_{ij}}$. 
	From (\ref{esamvar}) and (\ref{esamvar1}), This  occurs $w.p\ \geq 1 - \delta_{t_{ij}}^{ab}$. Thus, $U_{ij}  = {X_{ij}^2}/{Y_{ij}}
	\geq \dfrac{X_{ij}^2}{(\rho_a^2+\rho_{ab}^2)\alpha_{t_{ij}}}
	$
	. Using (\ref{esammu}),
	\begin{align} \label{efirstterm}
	\frac{1}{4}U_{ij} \geq \frac{\log_{e}(1+M_{ab})}{4} \qquad w.p\ \geq 1 - \delta_{t_{ij}}^{ab}.
	\end{align}
	Also from (\ref{esamvar}) and (\ref{esamvar1}),
	\begin{align*}
	V_{ij} = \frac{\sigma^2_{w_{i}t_{ij}}}{\sigma^2_{b_{ij}t_{ij}}} + \frac{\sigma^2_{b_{ij}t_{ij}}}{\sigma^2_{w_{i}t_{ij}}} &\geq \frac{\rho_a^2(2-\alpha_{t_{ij}})}{\rho_{ab}^2\alpha_{t_{ij}}} + \frac{\rho_{ab}^2(2-\alpha_{t_{ij}})}{\rho_a^2\alpha_{t_{ij}}}\\
	&=\frac{2-\alpha_{t_{ij}}}{\alpha_{t_{ij}}} R_{ab}.
	\end{align*}
	Hence, $w.p\ \geq 1 - \delta_{t_{ij}}^{ab},$
	\begin{equation}\label{esecondterm}
	\frac{1}{4}\log_e\!\Big[\frac{ V_{ij}}{4} +\frac{1}{2}\Big] \geq \frac{1}{4}\log_e\!\Big[ \Big(\frac{2-\alpha_{t_{ij}}}{\alpha_{t_{ij}}}\Big)\frac{R_{ab}}{4} +\frac{1}{2}\Big].
	\end{equation}
	Now let us look at $d_{ij}$. From (\ref{efirstterm}) and (\ref{esecondterm}),
	\begin{equation*}
	d_{ij}  \geq \frac{\log_{e}(1+M_{ab})}{4} + \frac{1}{4}\log_e\Big[ \Big(\frac{2-\alpha_{t_{ij}}}{\alpha_{t_{ij}}}\Big)\frac{R_{ab}}{4} +\frac{1}{2}\Big].
	\end{equation*} 
	To ensure $d_{ij} \geq \frac{1}{\sqrt{t_{ij}-1}}$ $w.p$ $\geq 1-\delta_{t_{ij}}^{ab}$, it is sufficient that:
	$$\frac{\log_{e}(1+M_{ab})}{4} + \frac{1}{4}\log_e \Big[ \Big(\frac{2-\alpha_{t_{ij}}}{\alpha_{t_{ij}}}\Big)\frac{R_{ab}}{4} +\frac{1}{2}\Big] \geq \frac{1}{\sqrt{t_{ij}-1}}.$$
	$$\Rightarrow \Big(\frac{2-\alpha_{t_{ij}}}{\alpha_{t_{ij}}}\Big)R_{ab}(1+M_{ab}) \geq  4\alpha_{t_{ij}} - 2(1+M_{ab}).$$
	\begin{equation}\label{equad}
	\Rightarrow 4\alpha_{t_{ij}}^2 + (R_{ab}-2)(1+M_{ab})\alpha_{t_{ij}} - 2R_{ab}(1+M_{ab}) \leq 0.
	\end{equation}
	The roots of the above quadratic are obtained as $(\psi_{ab},\psi'_{ab})=$
	\begin{align*}
		\tfrac{{-(1+M_{ab})(R_{ab}-2) \pm \sqrt{(1+M_{ab})^2(R_{ab}-2)^2+32R_{ab}(1+M_{ab})}}}{8}. \mbox{ If } x>0
		\end{align*}
	then $x+x^{-1}\geq 2\Rightarrow 
	R_{ab}\geq 2$. Also, $M_{ab} \geq 0$. Thus, the roots are real and $\psi'_{ab}\leq\psi_{ab}$. Therefore, from (\ref{equad}), we have  $4(\alpha_{t_{ij}}-\psi'_{ab})(\alpha_{t_{ij}}-\psi_{ab})\leq0\Rightarrow\psi'_{ab}\leq\alpha_{t_{ij}}\leq\psi_{ab}$. If we assume $\psi_{ab}<1$ and use the fact that $M_{ab}\geq0$, we will arrive at the contradiction $R_{ab}<2$. Thus, $\psi_{ab}\geq1$. Also note that $\alpha_{t_{ij}} \geq 1$ and $\psi'_{ab}\leq0$. Hence, $1\leq\alpha_{t_{ij}}\leq\psi_{ab}$ where
	\begin{align*}
	\psi_{ab} = \tfrac{\sqrt{(R_{ab}-2)^2(1+M_{ab})^2 +32R_{ab}(1+M_{ab})} -(R_{ab}-2)(1+M_{ab}) }{8}.
	\end{align*}
	Thus, $d_{ij} \geq \frac{1}{\sqrt{t_{ij}-1}}$, $w.p$ $\geq 1-\delta_{t_{ij}}^{ab}$ if $\alpha_{t_{ij}} \leq \psi_{ab}\ $ \begin{align*} \Rightarrow \ t_{ij} \geq 1+ \frac{16}{(\log_e\psi_{ab})^2}.\tag*{\qedhere}\end{align*}
\end{proof}}
	{
		\bibliographystyle{IEEEtran}
		\bibliography{bibl_final_draft}
	}
	
\end{document}